\documentclass{article}

\usepackage{microtype}
\usepackage{graphicx}
\usepackage{subfigure}
\usepackage{booktabs}
\usepackage{hyperref}

\usepackage{url}
\usepackage{wrapfig}
\usepackage{amsmath}
\usepackage{amssymb}
\usepackage{mathtools}
\usepackage{amsthm}
\usepackage{thmtools, thm-restate}
\usepackage{multirow}
\usepackage{enumitem}

\usepackage[preprint]{icml2026}

\theoremstyle{plain}
\newtheorem{theorem}{Theorem}[section]

\newtheorem{lemma}[theorem]{Lemma}

\theoremstyle{definition}

\theoremstyle{remark}
\newtheorem{remark}[theorem]{Remark}
\usepackage{xspace}

\newcommand{\method}{Soft FB\xspace}

\icmltitlerunning{Soft Forward-Backward Representations}

\newcommand{\icmlEqualSupervision}{\textsuperscript{*}Equal senior authorship }

\begin{document}

\twocolumn[
\icmltitle{\textit{Soft} Forward-Backward Representations for \\ Zero-shot Reinforcement Learning with General Utilities}

\icmlsetsymbol{equal}{*}

\begin{icmlauthorlist}
\icmlauthor{Marco Bagatella}{eth,mpi}
\icmlauthor{Thomas Rupf}{eth}
\icmlauthor{Georg Martius}{equal,unitue}
\icmlauthor{Andreas Krause}{equal,eth}
\end{icmlauthorlist}

\icmlaffiliation{eth}{ETH Zurich, Zurich, Switzerland}
\icmlaffiliation{mpi}{Max Planck Institute for Intelligent Systems, Tubingen, Germany}
\icmlaffiliation{unitue}{University of Tubingen, Tubingen, Germany}

\icmlcorrespondingauthor{Marco Bagatella}{mbagatella@ethz.ch}

\icmlkeywords{Forward-Backward, Successor Features, Unsupervised Reinforcement Learning, General Utility Reinforcement Learning, Convex Reinforcement Learning}

\vskip 0.3in
]

\printAffiliationsAndNotice{\icmlEqualSupervision}

\begin{abstract}
Recent advancements in zero-shot reinforcement learning (RL) have facilitated the extraction of diverse behaviors from unlabeled, offline data sources.
In particular, forward-backward algorithms (FB) can retrieve a family of policies that can approximately solve any standard RL problem (with additive rewards, linear in the occupancy measure), given sufficient capacity.
While retaining zero-shot properties, we tackle the greater problem class of {\em RL with general utilities}, in which the objective is an arbitrary differentiable function of the occupancy measure.
This setting is strictly more expressive, capturing tasks such as distribution matching or pure exploration, which may not be reduced to additive rewards.
We show that this additional complexity can be captured by a novel, maximum entropy (\textit{soft}) variant of the forward-backward algorithm, which recovers a family of stochastic policies from offline data.
When coupled with zero-order search over compact policy embeddings, this algorithm can sidestep iterative optimization schemes, and optimizes general utilities directly at test-time.
Across both didactic and high-dimensional experiments, we demonstrate that our method retains favorable properties of FB algorithms, while also extending their range to more general RL problems.
\end{abstract}

\section{Introduction}

\begin{figure}
    \centering
    \vspace{-4mm}
    \includegraphics[width=\columnwidth]{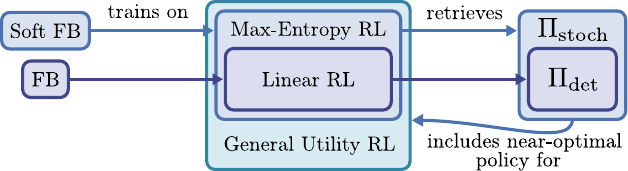}
    \vspace{-5mm}
    \caption{
    We propose \method, a soft version of the Forward-Backward algorithm which solves maximum entropy RL instances to retrieve a richer set of stochastic policies, and searches them to optimize general utilities at test-time.}
    \label{fig:teaser}
    \vspace{-7mm}
\end{figure}

Departing from the strict necessity of an explicit reward signal at training time, zero-shot RL has recently received increasing attention.
Although several definitions have been brought forward, they largely agree: zero-shot methods may use significant amounts of compute in an initial, unsupervised pretraining phase, but should be capable of producing near-optimal behavior with minimal computation for any reward specified at test-time \citep{touati2023does}.
According to this criterion, goal-reaching RL \citep{andrychowicz2017hidsight, eysenbach2022contrastive} may be seen as a practical instance of zero-shot RL, which is, however, limited to indicator reward functions.
Recent works further expand the goal-reaching setting to allow for the specification of more complex behavior \citep{frans2024unsupervised, sikchi2024rl, agarwal2024proto}.
Among them, forward-backward representations \citep{touati2021learning} have been proposed as a zero-shot method capable of solving for arbitrary (Markov) reward functions at test-time.
Although some have speculated that this family of reward functions is sufficient to extract any behavior of interest \citep{silver2021reward}, its expressiveness remains limited \citep{kumar2022policy}. \looseness -1

In this work, we turn towards a broader class of RL problems: while Markov rewards lead to a linear objective in the policy's occupancy, we aim to optimize arbitrary differentiable functions of the policy's occupancy, known as General Utilities (GU,  \citealp{zhang2020variational, zhang2021convergence, kumar2022policy, barakat2023reinforcement}).
This increased scope includes interesting problem instances, such as pure exploration \citep{hazan2019provably}, active learning \citep{mutny2023active} and learning from observations \citep{torabi2019recent}, which are generally beyond the reach of linear RL algorithms.
A large body of work has explored similar settings, with particular focus on convex \citep{zahavy2021reward, mutti2022challenging} or submodular objectives \citep{de2024global}. Generally, the resulting practical algorithms rely on online learning through semi-gradient methods \citep{zahavy2021reward, de2024global}.
Ad-hoc algorithms have been proposed for specific instances of General Utilities (e.g.,  \citet{ho2016generative, hazan2019provably, ma2022offline, bolland2024off}), but they are not designed to generalize beyond their respective domain.

In this work, we introduce \textbf{S}oft \textbf{FB} (SFB), a practical zero-shot algorithm for RL with GUs.
Building upon the forward-backward framework \citep{touati2021learning}, we introduce a soft variant \citep{ziebart2008maximum, haarnoja2018soft, hunt2019composing} which retrieves a family of stochastic policies, approximating all solutions to maximum entropy RL problems.
Interestingly, given sufficient capacity and exact training, the set of policies retrieved through entropy-regularized linear RL can be shown to contain a near-optimal Markov policy for each differentiable GU.
Thus, one may optimize GUs zero-shot by simply (i) learning maximum entropy policies and (ii) searching among them.
The first step bypasses the challenge of parameterizing GU instances; while the second step remains generally intractable, we find that it can be solved approximately and efficiently through zero-order search over low-dimensional policy representations.
Rather than running a dedicated per-objective optimization loop, we therefore amortize computation in an unsupervised pre-training phase, and optimize GUs through test-time search.
Furthermore, unlike existing Frank-Wolfe schemes \citep{hazan2019provably}, the solution retrieved is not a mixture of policies, but a single, Markov policy.
As a result, our method can be seen both as a generalization of existing zero-shot frameworks, or as a practical offline algorithm for optimizing GUs.

We demonstrate the flexibility of the method in illustrative continuous environments, in which Soft FB can solve several zero-shot RL problems that are out-of-reach for existing, linear methods.
We then extend this evaluation to established, complex zero-shot benchmarks, demonstrating scalability and studying the impact of entropy regularization for standard, linear tasks. Our contributions can be summarized as follows:
\begin{itemize}[noitemsep]
    \item We propose \method, a \textit{soft} forward-backward algorithm, capable of retrieving stochastic policies and optimizing GUs zero-shot.
    \item We provide formal guarantees for the expressiveness of our method.
    \item We present a thorough empirical evaluation of our method, demonstrating its ability to capture a richer class of policies while retaining the desirable properties of forward-backward algorithms.
\end{itemize}

After establishing essential notation and motivations in Sections \ref{sec:background} and \ref{sec:motivation}, we present and analyze our algorithm in Section \ref{sec:method}. Experiments, related works and a broad discussion follow in Section \ref{sec:exp}, \ref{sec:related} and \ref{sec:conclusion}, respectively.

\section{Background}
\label{sec:background}

In the context of sequential decision making, a general way to describe an environment is through a reward-free MDP $\mathcal{M} = (\mathcal{S}, \mathcal{A}, P, \mu_0, \gamma)$, where $\mathcal{S}$ and $\mathcal{A}$ are potentially continuous state and actions spaces, $P: \mathcal{S} \times \mathcal{A} \to \Delta(\mathcal{S})$ is a stochastic transition kernel\footnote{$\Delta(\cdot)$ denotes the space of distributions over $\cdot$}, $\mu_0 \in \Delta(\mathcal{S})$ is an initial state distribution, and $\gamma \in (0,1)$ is a discount factor.
Additionally, the agent's behavior may be described by a Markov policy, that is a state-conditional action distribution $\pi: \mathcal{S} \to \Delta(\mathcal{A})$.
These simple entities induce a discounted occupancy measure over states-action pairs, also known as the \textit{successor measure} of policy $\pi$:
\begin{align*}
M^\pi(s,a,X) \hspace{-0.5ex} = \hspace{-0.5ex} (1-\gamma)\hspace{-0.5ex} \sum_{t\geq 0}\hspace{0ex} \gamma^{t} \text{Pr}((s_t,a_t) \hspace{-0.5ex} \in \hspace{-0.5ex} X \hspace{-0.5ex} \mid  \hspace{-0.5ex} s_0 \hspace{-0.5ex} = \hspace{-0.5ex} s,a_0 \hspace{-0.5ex} = \hspace{-0.5ex} a)
\end{align*}
where $X \subseteq \mathcal{S \times A}$ and $\text{Pr}(\cdot)$ is the visitation likelihood under policy $\pi$ and dynamics $P$.
If $\mathcal{S}$ and $\mathcal{A}$ are both discrete, $M^\pi$ may be directly represented as a $(|\mathcal{S}||\mathcal{A}| \times |\mathcal{S}||\mathcal{A}|)$ stochastic matrix.
Intuitively, each element of the matrix would track the discounted cumulative likelihood of visiting a given state-action pair, starting from another state-action pair.
While our algorithm generalizes to continuous spaces, we will consider the discrete case in our formal derivations.\looseness -1

We further note that the successor measure might instead track state-only visitations. By overloading $X \in \mathcal{S}$,
\begin{align*}
M_\mathcal{S}^\pi(s,a,X) \hspace{-0.5ex} & = \hspace{-0.5ex} (1-\gamma)\hspace{-0.5ex} \sum_{t\geq 0}\hspace{0ex} \gamma^{t} \text{Pr}(s_t \hspace{-0.5ex} \in \hspace{-0.5ex} X \hspace{-0.5ex} \mid  \hspace{-0.5ex} s_0 \hspace{-0.5ex} = \hspace{-0.5ex} s,a_0 \hspace{-0.5ex} = \hspace{-0.5ex} a) \\
& = \hspace{-0.5ex} M^\pi(s,a,X \times \mathcal{A}).
\end{align*}
Due to a particular choice of parameterization, our practical algorithm will estimate this object directly.
However, samples $(s, a) \sim M^\pi$ can be easily obtained by sampling $s \sim M^\pi_\mathcal{S}$ and $a \sim \pi(\cdot|s)$, as we describe in Section \ref{sec:inference}.
We will finally use $\mathrm{M}^\pi$ and $\mathrm{M}^\pi_\mathcal{S}$ to denote successor measures when marginalized over the initial state distribution $\mu_0$, e.g. $\mathrm{M}^\pi(X) = \mathop{\mathbb{E}}_{s_0 \sim \mu_0, a_0 \sim \pi(s_0)} M^\pi(s_0, a_0, X)$.

Many interesting RL problem instances can be directly defined as a function of the successor measure $M^\pi$, and classified according to the properties of the function.
Standard RL problems can be expressed as a discounted sum of Markov rewards: if the rewards are expressed as a vector $R$ the objective can be simply computed as $J^\pi_\text{lin} = \langle \mathrm{M}^\pi , R \rangle$, and this instance is thus referred to as \textit{Linear} RL.
Another instance that has received significant attention in recent years is that of \textit{Maximum Entropy} RL \citep{ziebart2008maximum, haarnoja2018soft}, which adds an entropy term to the linear objective: $J^\pi_\mathcal{H} = \langle \mathrm{M}^\pi , R + \mathcal{H}^\pi\rangle$, where $\mathcal{H}^\pi$ is a vector which contains the entropy of the policy $\mathcal{H}^\pi(s, a) = \mathcal{H}(\pi(\cdot|s))$ for each state-action pair $(s,a) \in \mathcal{S} \times \mathcal{A}$.
Because of this policy-dependent entropy term, maximum entropy RL is not a linear problem in the occupancy, but rather an instance of \textit{Convex} RL \citep{mutti2022challenging}.
In general, Convex RL problems can be expressed as $J^\pi_{\text{con}}=f(\mathrm{M}^\pi)$, where $f(\cdot)$ is an arbitrary convex function.
Finally, all these instances are encompassed by \textit{General Utility} RL \citep{zhang2020variational}, which aims at optimizing a general differentiable scalar objective $J^\pi_{\text{GU}}=f(\mathrm{M}^\pi)$.

\section{Forward-Backward Representations and General Utilities}
\label{sec:motivation}

Having established common terminology, we will now analyze the forward-backward algorithm for zero-shot reinforcement learning \citep{touati2021learning, blier2021learning} and pinpoint its limitations motivating this work.
At its core, the FB framework introduces a family of parameterized policies $\{\pi_z\}_{z \in \mathbb{R}^d}$, each inducing an occupancy $M^z := M^{\pi_z}$. Crucially, each occupancy undergoes a specific low-rank decomposition 
\begin{equation}
M^z=F_z^\top B,    
\label{eq:low_rank_decomposition}
\end{equation}
where both $F_z$ and $B$ are $(d \times |\mathcal{S}||\mathcal{A}|)$ matrices, and may be called the \textit{forward} and \textit{backward} representation matrices.
If the decomposition holds, then for a $|\mathcal{S}||\mathcal{A}|$-dimensional reward vector $R$, the Q-function for a policy $\pi_z$ can be simply expressed as
\begin{equation}
Q^z = M^z R = F_z^\top BR \stackrel{BR = z'}{:=} F_z^\top z' ,
\end{equation}
where $B$ can be reinterpreted as a projection from the space of reward vectors to a $d$-dimensional reward embedding $z' := BR$.
Finally, the policy $\pi_z$ is enforced to be optimal with respect to the discounted sum of rewards encoded by its own reward embedding, that is
\begin{equation}
\pi_z \in \mathop{\text{argmax }}_\mathcal{A} F_z^\top z, 
\label{eq:greedy_policy}
\end{equation}
leading to the following formal result:
\begin{theorem} {\normalfont \citet{touati2021learning}} For an arbitrary bounded reward vector $R \in \mathbb{R}^{|\mathcal{S}||\mathcal{A}|}$, if both Equations \ref{eq:low_rank_decomposition} and \ref{eq:greedy_policy} hold for all $z \in \mathcal{Z}$, $\pi_{BR}$ is optimal with respect to $R$: $M^{\pi_{BR}}R = \max_{\pi} M^\pi R$.
\end{theorem}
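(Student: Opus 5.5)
Set $z := BR$ and verify that $\pi_z$ satisfies the Bellman optimality equation for $R$; optimality then follows from the standard uniqueness of the fixed point of the optimality operator.

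First, express the Q-function of $\pi_z$ through the decomposition. By Equation \ref{eq:low_rank_decomposition}, $Q^{z} := M^{z} R = F_z^\top B R = F_z^\top z$. The Q-function is therefore exactly the quantity that $\pi_z$ maximizes in Equation \ref{eq:greedy_policy}. Hence $\pi_z$ is greedy with respect to its own Q-function: for every $s$, $\pi_z(\cdot|s)$ is supported on $\mathop{\text{argmax}}_a Q^{z}(s,a)$.

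Second, show that a policy greedy with respect to its own Q-function is optimal. With the normalized successor measure, $Q^\pi = M^\pi R$ satisfies the policy evaluation equation
\[
Q^\pi(s,a) = (1-\gamma) R(s,a) + \gamma\, \mathbb{E}_{s'\sim P(s,a),\, a'\sim\pi(s')} Q^\pi(s',a').
\]
This follows by splitting off the $t=0$ term in the definition of $M^\pi$ and applying the Markov property. Because $\pi_z$ is greedy with respect to $Q^z$, the inner expectation equals $\max_{a'} Q^z(s',a')$. So $Q^z$ is a fixed point of the (scaled) Bellman optimality operator
\[
\mathcal{T}Q(s,a) = (1-\gamma)R(s,a) + \gamma\,\mathbb{E}_{s'} \max_{a'} Q(s',a').
\]
The operator $\mathcal{T}$ is a $\gamma$-contraction in sup-norm on bounded functions, and boundedness of $R$ guarantees $Q^z$ is bounded. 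So $Q^z$ coincides with the unique fixed point $Q^\star$.

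Third, relate this back to $\max_\pi M^\pi R$. For any policy $\pi$, $Q^\pi = \mathcal{T}^\pi Q^\pi \le \mathcal{T} Q^\pi$. By monotonicity of $\mathcal{T}$, iterating gives $Q^\pi \le \mathcal{T}^n Q^\pi \to Q^\star$. Therefore $Q^\pi \le Q^\star = Q^z$ elementwise, which is the claim $M^{\pi_{BR}}R = \max_\pi M^\pi R$, with the maximum attained elementwise.

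The main obstacle is only bookkeeping. Equation \ref{eq:greedy_policy} must be read as holding at every state, including states not reached from $\mu_0$, so that the fixed-point argument applies globally. One must also handle the $(1-\gamma)$ normalization consistently, since it merely rescales both sides.
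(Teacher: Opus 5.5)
Your proof is correct and follows the same route as the paper, which cites \citet{touati2021learning} for this result and proves the soft analogue (Theorem~\ref{th:maxent}) the same way. That route is: substitute $z=BR$, use Equation~\ref{eq:low_rank_decomposition} to identify $F_z^\top z$ with $Q^{z}=M^{z}R$, and conclude optimality because $\pi_z$ is greedy with respect to its own Q-function. The only difference is that you prove the last step explicitly via the contraction and monotonicity of the Bellman optimality operator, whereas the paper simply invokes the standard optimality criterion.
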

While this theorem guarantees the retrieval of a solution for each \textit{linear} RL problem, the policy extraction objective in Equation \ref{eq:greedy_policy} (or its empirical counterparts, as we demonstrate in Appendix \ref{app:collapse}) may simply produce deterministic policies. This is sufficient for any linear RL instance, as they each admit an optimal deterministic policy \citep{sutton1998reinforcement}, but remains generally suboptimal for GUs.
This is the main insight motivating this work: the set of policies retrieved by FB may not contain the solution to non-linear RL problems.
\begin{remark}
    Let $\Pi=\{\pi: \mathcal{S} \to \Delta(\mathcal{A})\}$ be the set of all Markov policies. There exist an MDP $\mathcal{M}$ and a scalar function $f$ of occupancy measures and a set of policies $\Pi_z = \{\pi_z\}_{z \in \mathcal{Z}}$ such that (i) $\pi_z$ satisfies Equations \ref{eq:low_rank_decomposition} and \ref{eq:greedy_policy} for all $z \in \mathcal{Z}$, and (ii) $\max_{\pi \in \Pi} f(\mathrm{M}^\pi) > \max_{\pi \in \Pi_z} f(\mathrm{M}^\pi)$.
\end{remark}
\textbf{Counterexample \;} As a straightforward counter-example, it suffices to consider an MDP with a single state $\mathcal{S}=\{s_0\}$ and two actions $\mathcal{A}=\{a_0, a_1\}$, and a convex RL objective such as \textit{pure exploration} over states and actions, i.e. $J^\pi = f(\mathrm{M}^\pi) = \mathcal{H}(\mathrm{M}^\pi)$, where $\mathcal{H}(\cdot)$ denotes entropy over state-action pairs.
For each reward function in $\mathbb{R}^2$, forward and backward representations satisfying both Equations \ref{eq:low_rank_decomposition} and \ref{eq:greedy_policy} exist, but may not retrieve the (optimal) uniform policy (see Appendix \ref{app:counterexample}).
Following this motivation, we propose a variant of the FB algorithm which recovers a richer class of policies, and may provably optimize all general utilities.

\section{\textit{Soft} Forward-Backward Representations}
\label{sec:method}

\subsection{Core algorithm}
As for the linear case, we start by introducing a family of parameterized policies $\{\pi_z\}_{z \in \mathbb{R}^d}$, and decompose their occupancy as $M^z=F_z^\top B$, thus enforcing Equation \ref{eq:low_rank_decomposition} again.
We however introduce an entropy regularization term to encourage stochastic behavior \citep{haarnoja2018soft}.
The action-state value function for a policy $\pi_z$ is then expressed as \looseness -1
\begin{align}
Q_\text{soft}^z & = M^z (R + \mathcal{H}_{\pi_z}) \nonumber \\
& = F_z^\top BR + M^z \mathcal{H}_{\pi_z} \nonumber \\
& \stackrel{z'=BR}{=} F_z^\top z' + M^z \mathcal{H}_{\pi_z},
\end{align}
where $\mathcal{H}_{\pi_z} \in \mathbb{R}_+^{|\mathcal{S}||\mathcal{A}|}$ contains the policy's entropy at each state, and is a convex term in the occupancy measure.
Conforming to soft policy improvement \citep{haarnoja2018soft}, each policy $\pi_z$ is then defined with respect to its regularized action-value function:
\begin{equation}
\pi_z \propto \text{exp}(F_z^\top z + M^z \mathcal{H}_{\pi_z}),
\label{eq:soft_policy}
\end{equation}
Equation \ref{eq:low_rank_decomposition}, which remains unchanged, and Equation \ref{eq:soft_policy} are the core of the Soft FB algorithm.
The introduction of entropy regularization alters the set of policies that are retrieved during training to include all policies with full support.
Crucially, as we will show formally in Section \ref{sec:enough}, training on maximum entropy RL instances is sufficient to capture $\epsilon$-optimal solutions to general utilities.
We will later describe a practical algorithm for estimating soft forward and backward representations from continuous data, in combination with sample-based objectives and function approximation (see Section \ref{sec:practical}).

\subsection{Guarantees}
\label{sec:enough}

Due to the introduction of entropy regularization, we can show that Soft FB retrieves the optimal policy among all Markov policies $\Pi$ for each maximum entropy RL instance.
\begin{restatable}{theorem}{maxent}
For an arbitrary bounded reward vector $R \in \mathbb{R}^{|\mathcal{S}||\mathcal{A}|}$
, if both Equations \ref{eq:low_rank_decomposition} and \ref{eq:soft_policy} hold for all $z \in \mathcal{Z}$, $\pi_{BR}$ is the optimal maximum entropy policy with respect to $R$: $M^{\pi_{BR}}(R+ \mathcal{H}_{\pi_{BR}}) = \max_{\pi \in \Pi} M^\pi (R+ \mathcal{H}_{\pi})$.\looseness -1
\label{th:maxent}
\end{restatable}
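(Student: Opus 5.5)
The plan mirrors the proof of the linear FB theorem of \citet{touati2021learning}, with the Bellman optimality argument replaced by its soft counterpart from \citet{haarnoja2018soft}. Fix $R$ and set $z = BR$. By Equation \ref{eq:low_rank_decomposition}, $F_z^\top z = F_z^\top B R = M^z R$. Hence the exponent in Equation \ref{eq:soft_policy} equals $M^z R + M^z \mathcal{H}_{\pi_z} = M^z(R+\mathcal{H}_{\pi_z}) = Q^{z}_\text{soft}$. This is exactly the soft action-value function of $\pi_z$ for reward $R$, which we write as $Q^{\pi_z}_\text{soft}$. So $\pi_{BR}(\cdot\mid s) \propto \exp Q^{\pi_{BR}}_\text{soft}(s,\cdot)$ at every state: $\pi_{BR}$ is a fixed point of soft policy improvement for the maximum entropy problem with reward $R$.

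The next step is to show that any such fixed point is optimal among all Markov policies. First I write the soft Bellman equation for $Q^\pi_\text{soft} = M^\pi(R+\mathcal{H}_\pi)$, using the identity $M^\pi = (1-\gamma)I + \gamma P^\pi M^\pi$ in matrix form. I will track the $(1-\gamma)$ normalisation and the convention that entropy is counted at the current step, and rescale the temperature if needed so that the Boltzmann form matches. The fixed-point property then gives $Q^{\pi_{BR}}_\text{soft}$ the form $(1-\gamma)(R + \mathcal{H}) + \gamma P V$. Here $V(s) = \mathbb{E}_{a\sim\pi}[Q(s,a)] + \text{entropy} = \log\sum_a \exp Q(s,a)$, up to scaling, by the Gibbs variational principle. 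Consequently $Q^{\pi_{BR}}_\text{soft}$ satisfies the soft Bellman \emph{optimality} equation, a $\gamma$-contraction in sup norm (the log-sum-exp operator is nonexpansive). Its unique fixed point is $Q^*_\text{soft}$, the supremum of $Q^\pi_\text{soft}$ over $\pi\in\Pi$. Boundedness of $R$ and finiteness of $\mathcal{A}$ make everything bounded, so the contraction argument is valid.

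As an alternative that avoids identifying $Q^*$, I can use the soft policy improvement lemma directly. For any $\pi\in\Pi$, Gibbs' inequality gives, at each state, $\mathbb{E}_{\pi}[Q^{\pi_{BR}}] + \mathcal{H}(\pi) \le \log\sum_a\exp Q^{\pi_{BR}}$, with the right-hand side attained by $\pi_{BR}$. Unrolling this inequality along trajectories of $\pi$, via the performance-difference identity, yields $Q^{\pi}_\text{soft} \le Q^{\pi_{BR}}_\text{soft}$ elementwise. That is, $M^\pi(R+\mathcal{H}_\pi) \le M^{\pi_{BR}}(R+\mathcal{H}_{\pi_{BR}})$ entrywise, which gives the claimed maximum, attained at $\pi_{BR}$.

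I expect the main obstacle to be bookkeeping rather than conceptual. The paper's definition of $\mathcal{H}_\pi(s,a) = \mathcal{H}(\pi(\cdot\mid s))$ counts the entropy of the current state, whose action is fixed, and includes the factor $(1-\gamma)$. Both must be matched to the exponent in Equation \ref{eq:soft_policy} so that the Gibbs step is exact, possibly by an implicit temperature $1-\gamma$. I would first verify the one-step identity $Q = (1-\gamma)(R+\mathcal{H}) + \gamma P^\pi Q$ carefully. I would then note that the comparison argument is insensitive to a constant additive shift per state, because the entropy term at state $s$ does not depend on the action $a$. For this reason the softmax optimality still holds after the normalisation is fixed.
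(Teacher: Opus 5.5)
\textbf{Same route as the paper, but the bookkeeping breaks it.} Your strategy matches the paper's proof. You set $z=BR$, use Equation~\ref{eq:low_rank_decomposition} to turn the exponent of Equation~\ref{eq:soft_policy} into $M^{\pi_{BR}}(R+\mathcal{H}_{\pi_{BR}})$, and then invoke optimality of Boltzmann fixed points. The paper does this last step by citing Theorem~1 of \citet{haarnoja2017reinforcement} ``up to normalization constants''; you propose to reprove it. The trouble lies in the bookkeeping you defer, and neither of your two fixes works.

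Let $q^\pi$ be the standard soft $Q$-function: entropy weight $1$, entropy counted only from $t\ge 1$. With the paper's $(1-\gamma)$-normalized measure,
$M^\pi(R+\mathcal{H}_\pi)(s,a)=(1-\gamma)\bigl[q^\pi(s,a)+\mathcal{H}(\pi(\cdot|s))\bigr]$.
Equation~\ref{eq:soft_policy} therefore gives $\pi_{BR}\propto\exp\bigl((1-\gamma)q^{\pi_{BR}}\bigr)$. The softmax runs at temperature $1/(1-\gamma)$ while the objective weights entropy by $1$. You cannot ``rescale the temperature'', because the statement fixes both.

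The second fix also fails. The current-state term $\mathcal{H}(\pi(\cdot|s))$ is action-independent but policy-dependent. It drops out of the softmax, but it does not drop out of the comparison between $\pi$ and $\pi_{BR}$. Your Gibbs/performance-difference step bounds $q^\pi$, or the state value, not $M^\pi(R+\mathcal{H}_\pi)$ entrywise.

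\textbf{A one-state counterexample.} Take one state, two actions, rewards $r=(r_1,r_2)$ with $r_1\neq r_2$, and $r_\pi=\sum_a\pi(a)r_a$. Then $M^\pi(R+\mathcal{H}_\pi)(a)=(1-\gamma)r_a+\gamma r_\pi+\mathcal{H}(\pi)$. Equation~\ref{eq:soft_policy} forces $\pi_{BR}=\mathrm{softmax}((1-\gamma)r)$. The hypotheses are satisfiable here, for instance with $B=I$ and $F_z^\top=M^{\pi_z}$. The entrywise maximum is attained only at $\mathrm{softmax}(\gamma r)$. The marginal $\mathrm{M}^\pi(R+\mathcal{H}_\pi)=r_\pi+\mathcal{H}(\pi)$ is maximized only at $\mathrm{softmax}(r)$.

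So for $\gamma\neq 1/2$ the entrywise inequality you aim for is false, and the marginal version fails for every $\gamma\in(0,1)$. The paper's proof has the same hole. With the unnormalized measure $\sum_t\gamma^t(P^\pi)^t$ one gets $\pi_{BR}=\mathrm{softmax}(r)$. This repairs the marginal claim but not the entrywise one, which separates the two issues.

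\textbf{What your argument does prove.} Use the unnormalized measure, or equivalently divide the exponent in Equation~\ref{eq:soft_policy} by $1-\gamma$. Then $\pi_{BR}\propto\exp(q^{\pi_{BR}})$, and your soft-policy-improvement step gives $q^\pi\le q^{\pi_{BR}}$ entrywise. One more application of Gibbs' inequality at each state then gives
$\sum_a\pi(a|s)\,[M^\pi(R+\mathcal{H}_\pi)](s,a)\le\sum_a\pi_{BR}(a|s)\,[M^{\pi_{BR}}(R+\mathcal{H}_{\pi_{BR}})](s,a)$
for every $s$. In particular, $\pi_{BR}$ maximizes $\mathrm{M}^\pi(R+\mathcal{H}_\pi)$. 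You should state and prove that corrected version rather than expect a rescaling to reconcile the literal one. If you want an entrywise statement, drop the $t=0$ entropy from $\mathcal{H}_\pi$.
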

Moreover, while only trained for solving maximum entropy instances, we can show that the set of policies retrieved by Soft FB includes arbitrarily good solutions to a greater class of problems: namely, it optimizes \textit{any} general utility.
\begin{restatable}{theorem}{generalutility}
    Let $f$ be an arbitrary differentiable function of occupancy measures, and $\tilde \Pi_z$ be the set of policies retrieved by Soft FB.
    If both Equations \ref{eq:low_rank_decomposition} and \ref{eq:soft_policy} hold for all $z \in \mathcal{Z}$, for any $\epsilon>0$ there exists a reward embedding $z' \in \mathcal{Z}$ such that $\pi_{z'} \in \tilde \Pi_{z}$ and $\max_{\pi \in \Pi} f(\mathrm{M}^\pi) - f(\mathrm{M}^{\pi_{z'}}) < \epsilon$.
    \label{cor:general}
    \vspace{-4mm}
    \label{th:generalutility}
\end{restatable}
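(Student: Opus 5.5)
The plan is to reduce the general-utility problem to the maximum entropy problem solved in Theorem~\ref{th:maxent}, by showing that every Markov policy with full support is the unique soft-optimal policy for some suitable reward. Full-support policies can approximate any Markov policy arbitrarily well. The argument has three steps.

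\textbf{Step 1 (approximation by full-support policies).} Let $\pi^\star$ attain, or nearly attain, $\sup_{\pi\in\Pi} f(\mathrm{M}^\pi)$. In the discrete setting $\Pi$ is a compact product of simplices and $\pi\mapsto \mathrm{M}^\pi$ is continuous; indeed $\mathrm{M}^\pi$ is a rational function of $\pi$ through $(I-\gamma P^\pi)^{-1}$, which is well defined for $\gamma<1$. Since $f$ is differentiable, hence continuous, $\pi\mapsto f(\mathrm{M}^\pi)$ is continuous on a compact set, so a maximizer $\pi^\star$ exists. Define the mixture $\pi_\delta=(1-\delta)\pi^\star+\delta\,\mathrm{Unif}(\mathcal{A})$ at each state. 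Then $\pi_\delta$ has full support and $\pi_\delta\to\pi^\star$ as $\delta\to 0$, so by continuity some $\delta>0$ gives $f(\mathrm{M}^{\pi^\star})-f(\mathrm{M}^{\pi_\delta})<\epsilon$.

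\textbf{Step 2 (inverse soft RL: every full-support policy is soft-optimal).} For a full-support $\pi$, choose the reward $R(s,a)=\log\pi(a|s)-\gamma\,\mathbb{E}_{s'\sim P(\cdot|s,a)}[V(s')]+V(s)$ with $V\equiv 0$, that is, simply $R(s,a)=\log\pi(a|s)$. This is bounded because $\pi$ has full support and the action space is finite. The soft Bellman optimality equations, with temperature $1$ and the paper's $(1-\gamma)$ normalization absorbed by rescaling $R$, are then satisfied by $Q(s,a)=\log\pi(a|s)$ and $V(s)=\log\sum_a \exp Q(s,a)=0$. The softmax policy is therefore $\exp(Q-V)=\pi$. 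By uniqueness of the soft-optimal policy, which follows from the contraction of the soft Bellman operator and strict concavity of the entropy, $\pi$ is the unique maximizer of $M^{\pi'}(R+\mathcal{H}_{\pi'})$ over $\Pi$. I expect this step to be the main technical point. The difficulty is keeping the normalization of the paper's $Q_\text{soft}$ consistent: the entropy enters inside $M^z$, which carries a $(1-\gamma)$ factor and a $t=0$ term. If needed, I will rescale $R$ and add a state-dependent potential so that the soft-greedy policy under the paper's convention of Equation~\ref{eq:soft_policy} equals $\pi$ exactly.

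\textbf{Step 3 (retrieval by Soft FB).} Apply Theorem~\ref{th:maxent} with the reward $R$ from Step 2. Because Equations~\ref{eq:low_rank_decomposition} and \ref{eq:soft_policy} hold for all $z$, the policy $\pi_{z'}$ with $z'=BR$ is an optimal maximum entropy policy for $R$. By uniqueness it coincides with $\pi_\delta$, at least on states reachable from $\mu_0$, which is all that matters for $\mathrm{M}^{\pi}$. Hence $\mathrm{M}^{\pi_{z'}}=\mathrm{M}^{\pi_\delta}$, and
\[
\max_{\pi\in\Pi} f(\mathrm{M}^\pi)-f(\mathrm{M}^{\pi_{z'}})<\epsilon .
\]
This requires $z'=BR\in\mathcal{Z}$, which I assume holds as in the linear FB theorem, for instance by taking $\mathcal{Z}=\mathbb{R}^d$.
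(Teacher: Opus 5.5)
Your proposal is correct and follows essentially the same route as the paper. The paper likewise proves, in a separate lemma, that any full-support $\pi$ is the maximum-entropy optimum for $R=\log\pi$ and therefore lies in $\tilde\Pi_z$ by Theorem~\ref{th:maxent}; it then approximates $\pi^\star$ by the mixture $(1-\alpha)\pi^\star+\alpha\,\mathcal{U}(\mathcal{A})$, bounds the occupancy gap by $2\alpha/(1-\gamma)$ via the simulation lemma, and concludes with uniform continuity of $f$ on the compact occupancy set.

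The differences are minor. The paper verifies optimality of $\pi$ through the identity $J^{\pi'}_{\mathcal{H}}(R)=-\mathbb{E}\sum_t\gamma^t D_{\mathrm{KL}}(\pi'(\cdot|s_t)\,\|\,\pi(\cdot|s_t))\le 0$ rather than through the soft Bellman fixed point. This identity also makes your normalization worry moot, since the $(1-\gamma)$ factor rescales the whole objective and leaves its maximizer unchanged. Your explicit treatment of the existence of $\pi^\star$ and of uniqueness on reachable states is slightly more careful than the paper's.
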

We remark that this is an existence result; a good policy is in practice recovered via search and successor-measure estimation, as described in Section \ref{sec:inference}.
We refer to Appendix \ref{app:proofs} for proofs of these two statements and further formal remarks. The latter has important practical consequences: while parameterizing general utilities is far from straightforward, maximum entropy instances can be easily parameterized, as we discuss in Section \ref{sec:practical}. As a result, we can directly optimize general utilities at test-time, while only dealing with tractable maximum entropy objectives at training time.\looseness -1

\subsection{Practical algorithm}
\label{sec:practical}

An algorithmic instantiation of \method in potentially continuous spaces needs to address two points. First, the embedding $z \in \mathcal{Z}$ solving each General RL problem as described so far would lie in $\mathbb{R}^d$, which is arbitrarily large and impractical to search. We will show that this can be easily addressed through a simple reparameterization. Second, we will introduce sample-based objectives that may be used to train function approximators over continuous state and action spaces, and approximately enforce Equations \ref{eq:low_rank_decomposition} and \ref{eq:soft_policy}.\looseness -1

\textbf{Reparameterization \;} \method reduces policy optimization to search over stochastic policies parameterized by vectors in $\mathcal{Z}=\mathbb{R}^d$.
In principle, every stochastic policy may be retrieved for some $z \in \mathcal{Z}$; however, this embedding could lie anywhere in $\mathbb{R}^d$. In fact, deterministic policies are only retrieved by embeddings whose norm approaches infinity (such that $F_z^\top z$ completely outweighs the entropy regularization term in Equation \ref{eq:soft_policy}).
Fortunately, recalling that optimal policies are invariant to linear transformations of Q-functions, we can find an alternative parameterization for which $z$ may be sampled from a bounded space. Starting from $z \in \mathcal{Z}$, we observe that 
\begin{align}
    Q^z & = F_z^\top z + M^z \mathcal{H}_{\pi_z} \nonumber \\
    & \propto \frac{1}{\|z\| + 1} (F_z^\top z + M^z \mathcal{H}_{\pi_z}) \nonumber \\
    & \stackrel{z' := \frac{z}{\|z\| + 1}}{=} F_z^\top z' + (1-\|z'\|) M^z \mathcal{H}_{\pi_z}.
    \label{eq:reparam}
\end{align}
Therefore, as $0 \leq \| z' \| < 1$, we can now map all maximum entropy RL instances to embeddings $z \in \mathcal{Z}$ sampled from the volume of a $d$-dimensional hypersphere \footnote{\citet{touati2021learning} operate over normalized embeddings $z$, thus sampling them from the \textit{surface} of the hypersphere.}.
This allows an intuitive geometric interpretation: the origin of $\mathbb{R}^d$ is associated with a uniform policy (as $z$ is the null vector, and the Q-function equates the entropy bonus alone), and embeddings on the surface parameterize the family of deterministic policies that standard FB algorithms would return (see Figure \ref{fig:hypersphere}). \looseness -1

\begin{figure}
    \includegraphics[width=0.8 \linewidth]{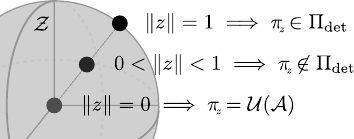}
    \vspace{0mm}
    \caption{Geometric interpretation of $z$ after reparameterization: the stochasticity of $\pi_z$ grows with $\|z\|$.}
    \label{fig:hypersphere}
    \vspace{-4mm}
\end{figure}

\begin{figure*}
    \centering
    \includegraphics[width=\linewidth]{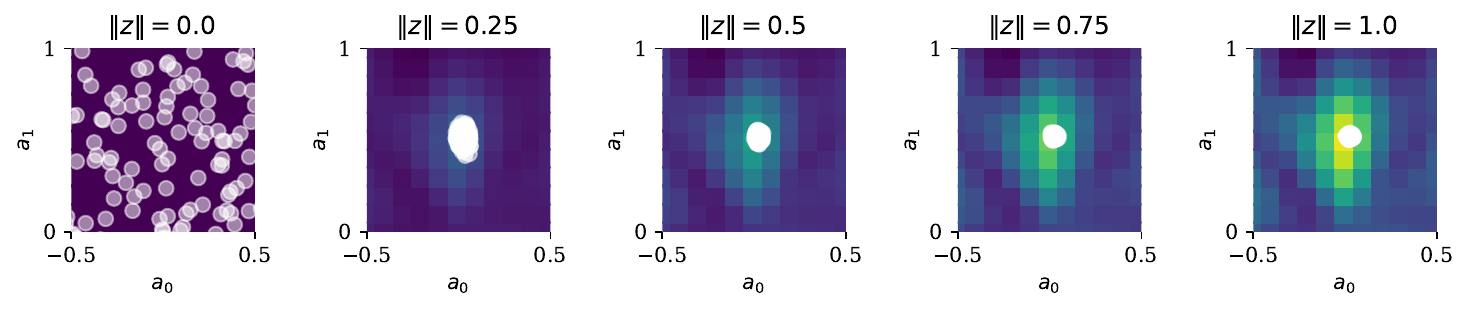}
    \vspace{-6mm}
    \caption{Qualitative evaluation of \method in a didactic environment. White dots are samples from policies $\pi_z$ over a 2D actions space, and the color map represents learned unregularized Q-values $Q_R^z$ for each action ($F_\theta(s_0, a, z)^\top z$). From left to right, we infer task embeddings $z$ for a goal-reaching task, and scale them linearly. The policies conditioned on $z$ become more deterministic as its norm increases. The same visualization for FB can be found in Appendix \ref{app:collapse}.}
    \label{fig:qualitative}
    \vspace{-3mm}
\end{figure*}

\textbf{Training objectives \;} A practical instantiation of \method for continuous spaces can leverage the framework proposed by \citet{touati2021learning}: we can approximate each entry of $M^z$ by the dot product between a forward and backward embedding, produced by function approximators $F_\theta: \mathcal{S} \times \mathcal{A} \times \mathcal{Z} \to \mathcal{Z}$ and $B_\phi: \mathcal{S} \to \mathcal{Z}$ \footnote{Following prior work \citep{touati2021learning}, our practical instantiation models the successor measure $M^\pi_\mathcal{S}$ over states only.}. The decomposition from Equation \ref{eq:low_rank_decomposition} is thus generalized as an equality over measures:
$M^z(s, a, ds') \approx F_\theta(s, a, z)B_\phi(s')\rho(s')$,
where $\rho(\cdot)$ represents the data distribution.
We can then adopt a standard objective for training forward and backwards representations, minimizing Bellman residuals by sampling from the data distribution $\rho$ (i.e., an offline dataset of transitions) \citep{blier2021learning}:
\begin{align}
    \mathcal{L}_\text{FB}(\theta, \phi) & = \mathop{\mathbb{E}}_{\substack{z \sim \mathcal{U}(\mathcal{Z}),\; (s_t,a_t,s_{t+1}) \sim \rho \\ s' \sim \rho, \; a_{t+1} \sim \pi_z(s_{t+1})}} \Big[ \big( F_\theta(s_t, a_t, z)^\top B_\phi(s') \nonumber \\
    & - \gamma \bar F(s_{t+1}, a_{t+1}), z)^\top \bar B(s') \big)^2 \nonumber \\
    & - 2F_\theta(s_t, a_t, z)^\top B_\phi(s_{t+1}) \Big],
\end{align}
where $\bar F$ and $\bar B$ may be target networks \citep{fujimoto2018addressing}, and forward representations are averaged over twin networks. As in \citet{touati2021learning}, we employ an auxiliary orthonormalization loss over $B_\phi$.
In continuous spaces, the policy $\pi_z$ is also represented through function approximation, and may be expressed as $\pi_\psi(\cdot|s, z)$~\footnote{While \method is compatible with expressive policies with explicit likelihoods (e.g., normalizing flows), in our evaluation $\pi_\psi$ will be parameterized as a diagonal Gaussian.}.
However, the policy's learning rule needs to be altered significantly, in order to account for entropy regularization.
First, we'll split the maximum entropy action-state value function from Equation \ref{eq:reparam} in a reward-based component, and an entropy-based component: $Q^z(s,a)=Q_R^z(s,a) + (1 - \|z\|) Q_\mathcal{H}^z(s,a)$. The former is easily computed as $Q_R^z(s, a) \propto F_\theta(s, a, z)^\top z$, while the latter can be estimated by training a parameterized critic  $Q_\mathcal{H, \eta}(s,a, z)$ through the TD objective
\begin{align}
\label{eq:entropy_td}
    & \mathcal{L}_\mathcal{H}(\eta) = \mathop{\mathbb{E}}_{\substack{z \sim \mathcal{U}(\mathcal{Z}),\; (s_t,a_t,s_{t+1}) \sim \rho \\ a_{t+1} \sim \pi_z(s_{t+1})}} \Big[ \Big( Q_{\mathcal{H}, \eta}(s_t, a_t, z) \nonumber \\
    & - \gamma \big( \bar Q_{\mathcal{H}, \eta}(s_{t+1}, a_{t+1}, z) - \log \pi_\psi(a_{t+1}|s_{t+1}, z) \big) \Big)^2 \Big].
\end{align}
In any case, recalling Equations \ref{eq:soft_policy} and \ref{eq:reparam}, we update the policy parameters $\psi$ by minimizing the KL divergence to the soft policy implicitly defined by the regularized Q-function  \citep{haarnoja2018soft}:
\begin{equation}
    \mathcal{L}_{\pi}(\psi) = \mathop{\mathbb{E}}_{\substack{z \sim \mathcal{U(Z)} \\ s \sim \mathcal{\rho}}} D_{KL} \Bigg(\pi_\psi(\cdot|s, z), \frac{e^{\frac{Q^z(s, \cdot)}{1 - \|z\|} }}{Z(s)} \Bigg),
\end{equation}
with $Z(s) = \int_{\mathcal{A}}\exp \big((1-\|z\|)^{-1} Q^z(s, a) \big)da$. As $Z(s)$ is independent of $\pi_\psi$, we can redefine an equivalent loss up to policy-independent factors as 
\begin{align}
    \mathcal{L}_{\pi}(\psi) & = \mathop{\mathbb{E}}_{\substack{z \sim \mathcal{U(Z)}, \; s \sim \rho \\ a \sim \pi_\psi(\cdot|s, z)}} (1 - \|z\|) [\log \pi_\psi(a|s, z) \nonumber \\
    & - Q_{\mathcal{H}, \eta}(s, a, z)] - F_\theta(s, a, z)^\top z,
\end{align}
which is optimized with a standard reparameterization trick.

\subsection{Inference}
\label{sec:inference}

Soft FB returns a set of parameterized policies, $\{\pi_z\}_{z \in \mathcal{Z}}$; as shown in Corollary \ref{cor:general}, this family is rich enough to include a solution for a large class of problems. However, solving a specific downstream task requires searching the policy class $\{\pi_z\}_{z \in \mathcal{Z}}$.
Fortunately, policies are parameterized by a low-dimensional embedding $z \in \mathcal{Z}$, and the search process remains arguably efficient.
In practice, paralleling Generalized Policy Improvement techniques in \citet{farebrother2025temporal}, we resort to zero-order optimization. Given an objective $J^\pi = f(\mathrm{M}_\pi)$ in its analytical form at inference time, a good policy $\pi_{z^\star} \approx \text{argmax}_{\pi} f(\mathrm{M}_\pi)$ may be found through offline evaluation:
\begin{equation}
z^\star \approx \mathop{\text{argmax}}_{z\in \mathcal{Z}} f(\mathrm{\hat M}_z) ,
\end{equation}
where $\mathrm{\hat M}_z$ is a sample-based estimate of the measure induced by policy $\pi_z$.
As $\mathcal{Z}$ is relatively low-dimensional, zero-order, sampling-based optimization methods such as random shooting or CEM \citep{rubinstein1999cross} can be leveraged to find $z^\star$. Due to its simplicity, this is the solution we adopt in our empirical evaluation.
We will consider two ways to recover sample-based estimates $\mathrm{\hat M}_z$ of policy dependent measures: implicit and explicit.

\textbf{Implicit measure model} \; Approximately recovering samples from the measure is possible by sampling states from the pre-training buffer distribution $\rho$ with importance weights $F_z^\top B$: $s \sim \mathrm{\hat M}^{\pi_z}_\mathcal{S}$ where
\begin{equation}
        \mathrm{\hat M}^{\pi_z}_\mathcal{S}(s) \propto \mathop{\mathbb{E}}_{\substack{s_0 \sim \mu_0 \\ a_0 \sim \pi_\psi(s_0, z)}} F_\theta(s_0, a_0, z)^\top B_\phi(s) \rho(s),
\vspace{-1mm}
\end{equation}
and then drawing actions $a \sim \pi_\psi(\cdot|s, z)$, thus only leveraging networks we previously trained.

\textbf{Explicit measure model} \;
As the importance weights are often poorly approximated, we can alternatively leverage recent work \citep{farebrother2025temporal} and train a flow-based generative model of the successor measure, completely offline. In practice, this may produce more accurate samples, at the cost of increased compute during pre-training. We detail training objectives in Appendix \ref{app:tdflow}. The variant of algorithms relying on generative models for inference will be marked by a subscript (e.g., SFB$_{flow}$).

Beside zero-order optimization, for some specific objectives soft FB retains the closed-form solutions that characterize the FB framework. Linear RL problems can be solved by computing $z^\star=\frac{BR}{\|BR\|}$ \citep{touati2021learning}.
Moreover, a closed-form solution is also possible in the case of maximum entropy RL by computing the optimal embedding $z^\star$ for the corresponding linear RL instance, and rescaling it by $\frac{F_z^\top z}{F_z^\top z + 1}$.
Furthermore, all imitation learning approaches described in \citep{pirotta2024fast} remain possible.

\begin{figure*}

    \centering
    \begin{minipage}{0.5\linewidth}
    \includegraphics[width=\linewidth]{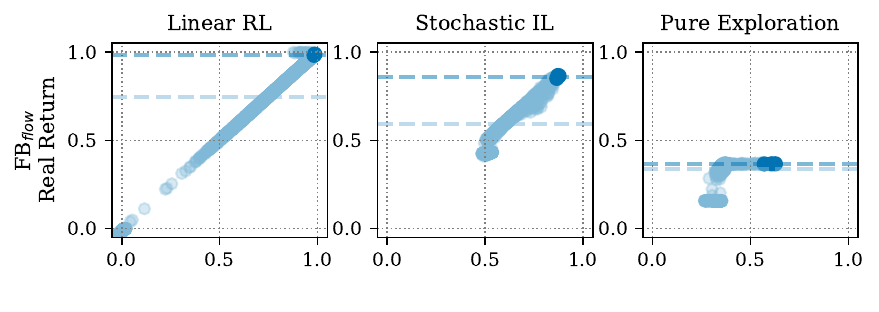}
    \end{minipage}%
    \begin{minipage}{0.5\linewidth}
    \includegraphics[width=\linewidth]{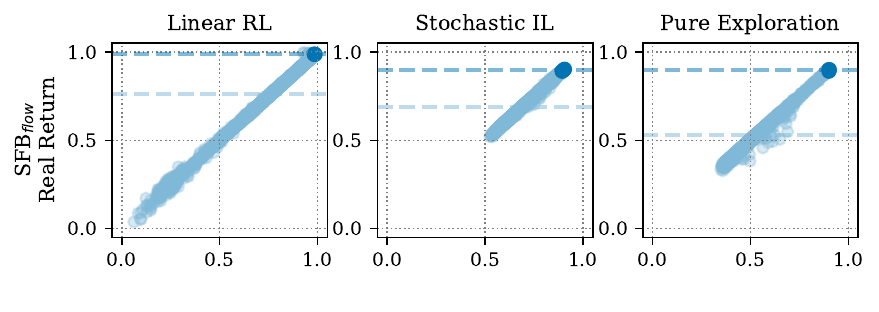}
    \end{minipage}%
    \vspace{-4mm}
    
    \begin{minipage}{0.5\linewidth}
    \includegraphics[width=\linewidth]{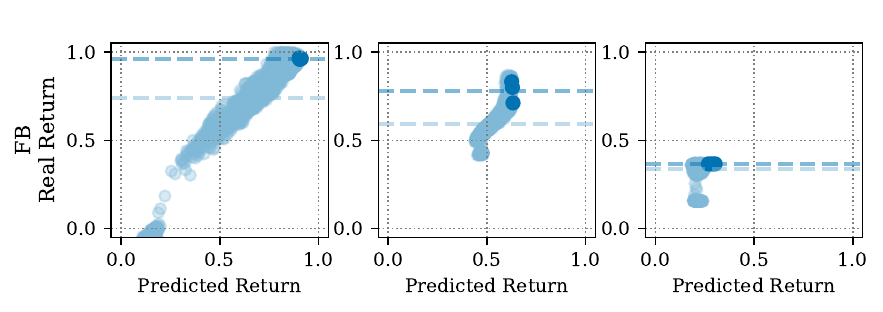}
    \end{minipage}%
    \begin{minipage}{0.5\linewidth}
    \includegraphics[width=\linewidth]{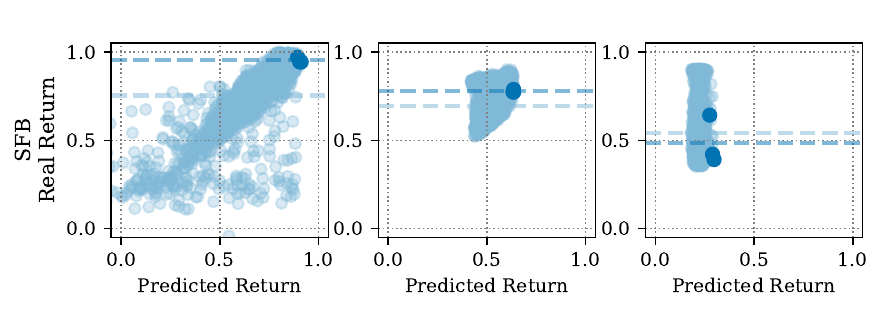}
    \end{minipage}%
    \vspace{-1mm}

    \caption{Quantitative results over several General RL objectives in a didactic environment. The $x$-axis and $y$-axis represent, respectively, offline performance estimates, and ground-truth performance in the environment. Each dot represents a policy sampled from each method across 3 seeds; for each seed, a darker dot marks the best policy according to offline evaluation. Horizontal lines represent the mean performance over points with the respective color. The policies captured by Soft FB (right) are more expressive, and the top policies affording to offline evaluation outperform, on average, those trained by FB (left). Explicit measure models (top) are more accurate.}
    \label{fig:quantitative}
    \vspace{-4mm}
\end{figure*}

\section{Experiments}
\label{sec:exp}

We now complement the formal analysis of \method with a detailed empirical evaluation.
We will first present qualitative results in an easily interpretable setting, and then evaluate how \method performs across zero-shot, general utility objectives, including imitation learning, pure exploration and constrained reinforcement learning.
Finally, we include an evaluation on standard, high dimensional deep reinforcement learning benchmarks \citep{yarats2022exorl}.

\begin{figure*}
    \centering
    \vspace{-2mm}
    \hspace{1.8cm} \texttt{walker} \hspace{2.4cm} \texttt{cheetah} \hspace{2cm} \texttt{quadruped} \hspace{2.4cm} \texttt{maze} \hspace{2.1cm}
    
    \includegraphics[width=1.0\linewidth]{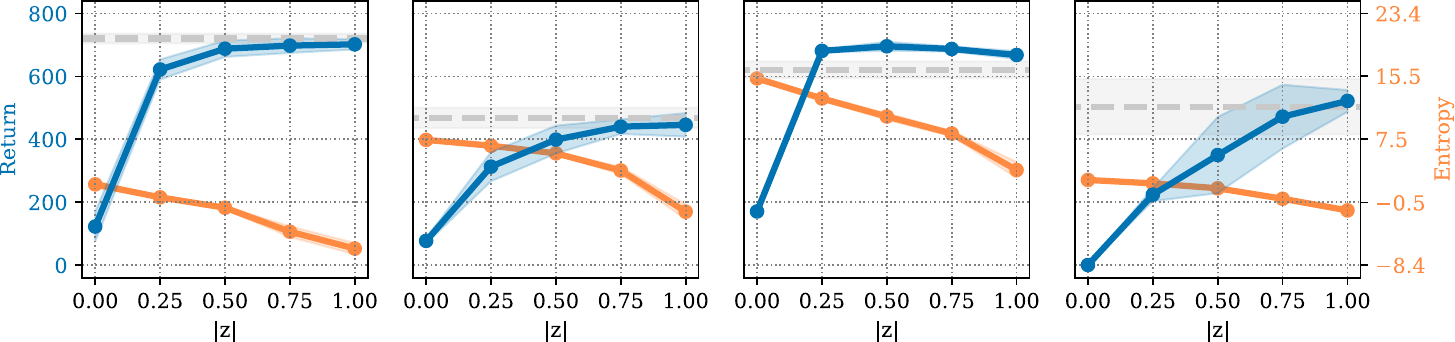}
    \vspace{-5mm}
    \caption{Zero-shot cumulative returns (in blue) and step-wise policy entropy (in orange) of \method for different levels of entropy regularization in DMC, averaged over linear tasks. As entropy regularization decreases, returns generally improve, eventually matching the performance of FB (in grey), or surprisingly exceeding it in \texttt{quadruped}. Shaded areas represent $95\%$ CIs over 5 seeds.\looseness -1}
    \label{fig:mujoco}
    \vspace{-5mm}
\end{figure*}

\subsection{Qualitative evaluation}
\label{subsec:qualitative}

Let us consider a simple, continuous environments, with 
\begin{align}
    & \mathcal{S} = \mathcal{A} = [-1, 1] \times [-1, 1], \; \mu_0 = \text{Dirac(0, 0)} \; \text{ and } \\
    & P(s' | s, a) = \begin{cases}
    \text{Dirac}(a) & \text{if } s=(0,0)\\
    \text{Dirac}(s) & \text{otherwise}.
\end{cases}
\end{align}
This didactic environment mimics a bandit-like MDP: the agent stays indefinitely in a state dictated by the action chosen in the initial state. We collect a dataset by executing actions uniformly at random, and train Soft FB over this data.
We start by defining a simple goal-reaching task as $R(s|g) = \mathbf{1}_{\|s-g\| < 0.2}$ and computing the corresponding reward embedding $z^\star=\frac{BR}{\|BR\|}$; for instance, we may fix $g=(0.0, 0.5)$.
We may then introduce a controlled amount of entropy regularization through a scaled reward embedding $z=\alpha z^\star$ with $\alpha \in [0, 1]$, and retrieve an optimal policy for all corresponding maximum entropy RL objectives in a zero-shot fashion.
If the norm of $z$ is set to $1$, the optimal deterministic policy should be retrieved, while very low norms should return near-uniform policies. \looseness -1

We observe this exact behavior in Figure \ref{fig:qualitative}. From left to right, we increase the norm of the task embedding, thus decreasing the degree of entropy regularization, and plot samples from $\pi_\psi(\cdot |s_0, z)$, as well as the entropy-\textit{un}regularized action-value function $Q_R(s_0, \cdot, z) = F_\theta(s_0, \cdot, z)^\top z$ as a function of 2D actions.
As expected, we observe that policies become more deterministic as the entropy regularization decreases, while still optimizing for the goal-reaching objective.
The state-action value function is highest around the goal, and scales linearly in the norm of $z$ by construction.
\begin{table}[ht]
\vspace{-1mm}
\centering
\caption{Zero-shot performance over several General Utilities in the didactic environment. For each inference technique, the best policy retrieved by SFB constitutes, on average, a better solution. Results are averaged across 3 seeds, and bold when their 95\% confidence intervals intersects with that of the best score.}
\label{tab:results}
\setlength{\tabcolsep}{2.5pt}
\begin{tabular}{l cccc}
\toprule
& FB & FB$_{flow}$ & SFB & SFB$_{flow}$ \\
\midrule
Linear RL & $0.96$ \tiny $\pm .01$ & $\textbf{0.99}$ \tiny $\pm .01$ & $0.95$ \tiny $\pm .01$ & $\textbf{0.99}$ \tiny $\pm .01$  \\
Goal-reaching RL & $\textbf{1.00}$ \tiny $\pm .01$ & $\textbf{1.00}$ \tiny $\pm .01$ & $\textbf{1.00}$ \tiny $\pm .01$ & $\textbf{1.00}$ \tiny $\pm .01$  \\
Deterministic IL & $0.78$ \tiny $\pm .06$ & $0.86$ \tiny $\pm .01$ & $0.78$ \tiny $\pm .01$ & $\textbf{0.90}$ \tiny $\pm .01$  \\
Stochastic IL & $0.67$ \tiny $\pm .01$ & $0.77$ \tiny $\pm .03$ & $0.79$ \tiny $\pm .02$ & $\textbf{0.83}$ \tiny $\pm .01$  \\
Pure Exploration & $0.37$ \tiny $\pm .01$ & $0.37$ \tiny $\pm .01$ & $0.49$ \tiny $\pm .13$ & $\textbf{0.90}$ \tiny $\pm .01$  \\
Constrained RL & $0.00$ \tiny $\pm .01$ & $0.00$ \tiny $\pm .01$ & $0.00$ \tiny $\pm .01$ & $\textbf{0.65}$ \tiny $\pm .52$  \\
Robust RL & $0.01$ \tiny $\pm .01$ & $0.79$ \tiny $\pm .09$ & $0.39$ \tiny $\pm .26$ & $\textbf{0.96}$ \tiny $\pm .02$  \\
\midrule
Average & 0.54 & 0.68 & 0.63 & \textbf{0.89}  \\
\bottomrule
\end{tabular}
\vspace{-5mm}
\label{tab:quantitative}
\end{table}

\subsection{Quantitative evaluation}

Within the environment outlined in the previous section, we now evaluate the performance of policies retrieved by Soft FB and FB, respectively.
We thus consider a set of different General RL objectives: (i) standard linear RL, (ii) goal-reaching RL, imitation of a (iii) deterministic or (iv) stochastic agent, (v) pure exploration, (vi) constrained RL and (vii) robust RL. Both IL objectives are formulated through minimization of forward KL divergences between successor measures; specifics of each objective can be found in Appendix \ref{app:implementation}.
For each objective and algorithm, we search for the optimal policy according to the inference procedures outlined in Section \ref{sec:inference}: we sample $1024$ task embeddings $z$, select the best one according to the measure model (implicit or explicit) and evaluate it in the environment.
Table \ref{tab:quantitative} compares the performance of Soft FB and FB, optionally relying on an explicit model for inference.
As expected, the performance gap between FB and Soft FB on objectives admitting a deterministic solution is minimal (the first three); however, when deterministic policies are not sufficient, Soft FB achieves better performance. Furthermore, we observe that relying on an explicit measure models for policy evaluation increases performance, confirming the effectiveness of flow modeling \citep{farebrother2025temporal} in stochastic settings.
This is further confirmed in Figure \ref{fig:quantitative}, which relates offline performance estimates to ground-truth performance in the environment in three representative tasks. These two are more strongly correlated\footnote{We report rank-based correlation coefficients and numerical results in Appendix \ref{app:correlation}.} when the former is estimated through an explicit measure model (SFB$_{flow}$, FB$_{flow}$); SFB$_{flow}$ performs the best as it also retrieves a richer class of policies (wider spread on the $y$-axis).

\subsection{High-dimensional evaluation}

So far, evaluation has largely focused on a continuous, yet low-dimensional setting, in which we demonstrated the expressiveness of policies retrieved by Soft FB.
We now extend our evaluation to standard, high-dimensional continuous control benchmarks, in order to demonstrate scalability.
We consider the Deepmind Control Suite (DMC, \citealp{tassa2018deepmind}), which includes tasks across four embodiments, and follow the established evaluation protocol in \citep{touati2023does} relying on exploratory data \citep{yarats2022exorl}.
We first consider classic linear objectives:
Figure \ref{fig:mujoco} tracks the return of policies trained with \method and their entropy when conditioned on scaled task embedding $z$ obtained by linear regression (i.e., the standard task inference procedure for Forward-Backward algorithms). As expected, decreasing the vector norm induces more stochastic behavior, which results in performance degradation. However, we notice that, for low entropy regularization, performance matches or exceeds that of policies trained by FB, confirming that the richness of policies does not come at the cost of performance. \looseness -1
\begin{table}[H]
\vspace{-1mm}
\caption{Zero-shot performance on general utilities in DMC. We consider entropy maximization and KL minimization with respect to a deterministic or stochastic expert (i.e. an optimal policy for a linear task, e.g. \texttt{walker-walk}, with injected Gaussian noise in the stochastic case). Scores are averages over 5 seeds, with 95\% confidence intervals and bold numbers signaling overlaps.}
\label{tab:mujoco}
\begin{center}
\begin{small}
\begin{tabular}{llcc}
\toprule
& & FB$_{flow}$ & Soft FB$_{flow}$ \\
\midrule
\multirow{6}{*}{\rotatebox[origin=c]{90}{\texttt{walker}}} 
& $\mathcal{H}(\mathrm{M}_\mathcal{S}^\pi)$ & $12.56$ \tiny $\pm .91$ & $\textbf{13.97}$ \tiny $\pm .25$ \\
& $\mathcal{H}(\mathrm{M}^\pi)$ & $9.86$ \tiny $\pm .10$ & $\textbf{18.01}$ \tiny $\pm .03$ \\
& $-KL(\mathrm{M}_\mathcal{S}^\pi; \mathrm{M}^{\pi_{stoch}}_\mathcal{S})$ & $\textbf{-5.35}$ \tiny $\pm .24$ & $\textbf{-5.25}$ \tiny $\pm .14$ \\
& $-KL(\mathrm{M}^\pi; \mathrm{M}^{\pi_{stoch}})$ & $-9.40$ \tiny $\pm .17$ & $\textbf{-1.48}$ \tiny $\pm .11$ \\
& $-KL(\mathrm{M}_\mathcal{S}^\pi; \mathrm{M}^{\pi_{det}}_\mathcal{S})$ & $\textbf{-5.69}$ \tiny $\pm .27$ & $\textbf{-5.69}$ \tiny $\pm .02$ \\
& $-KL(\mathrm{M}^\pi; \mathrm{M}^{\pi_{det}})$ & $-9.43$ \tiny $\pm .18$ & $\textbf{-1.53}$ \tiny $\pm .09$ \\
\midrule
\multirow{6}{*}{\rotatebox[origin=c]{90}{\texttt{cheetah}}} 
& $\mathcal{H}(\mathrm{M}_\mathcal{S}^\pi)$ & $11.74$ \tiny $\pm .43$ & $\textbf{13.63}$ \tiny $\pm .10$ \\
& $\mathcal{H}(\mathrm{M}^\pi)$ & $11.68$ \tiny $\pm .89$ & $\textbf{17.83}$ \tiny $\pm .07$ \\
& $-KL(\mathrm{M}_\mathcal{S}^\pi; \mathrm{M}^{\pi_{stoch}}_\mathcal{S})$ & $-5.56$ \tiny $\pm .31$ & $\textbf{-4.56}$ \tiny $\pm .31$ \\
& $-KL(\mathrm{M}^\pi; \mathrm{M}^{\pi_{stoch}})$ & $-7.25$ \tiny $\pm .76$ & $\textbf{-1.02}$ \tiny $\pm .08$ \\
& $-KL(\mathrm{M}_\mathcal{S}^\pi; \mathrm{M}^{\pi_{det}}_\mathcal{S})$ & $-5.45$ \tiny $\pm .65$ & $\textbf{-4.38}$ \tiny $\pm .18$ \\
& $-KL(\mathrm{M}^\pi; \mathrm{M}^{\pi_{det}})$ & $-7.55$ \tiny $\pm .79$ & $\textbf{-1.23}$ \tiny $\pm .08$ \\
\midrule
\multirow{6}{*}{\rotatebox[origin=c]{90}{\texttt{quadruped}}} 
& $\mathcal{H}(\mathrm{M}_\mathcal{S}^\pi)$ & $13.37$ \tiny $\pm .43$ & $\textbf{14.43}$ \tiny $\pm .24$ \\
& $\mathcal{H}(\mathrm{M}^\pi)$ & $10.11$ \tiny $\pm .08$ & $\textbf{19.10}$ \tiny $\pm .01$ \\
& $-KL(\mathrm{M}_\mathcal{S}^\pi; \mathrm{M}^{\pi_{stoch}}_\mathcal{S})$ & $-5.44$ \tiny $\pm .28$ & $\textbf{-4.64}$ \tiny $\pm .20$ \\
& $-KL(\mathrm{M}^\pi; \mathrm{M}^{\pi_{stoch}})$ & $-9.92$ \tiny $\pm .16$ & $\textbf{-0.76}$ \tiny $\pm .04$ \\
& $-KL(\mathrm{M}_\mathcal{S}^\pi; \mathrm{M}^{\pi_{det}}_\mathcal{S})$ & $-5.61$ \tiny $\pm .30$ & $\textbf{-4.52}$ \tiny $\pm .16$ \\
& $-KL(\mathrm{M}^\pi; \mathrm{M}^{\pi_{det}})$ & $-9.94$ \tiny $\pm .20$ & $\textbf{-0.79}$ \tiny $\pm .02$ \\
\midrule
\multirow{6}{*}{\rotatebox[origin=c]{90}{\texttt{maze}}} 
& $\mathcal{H}(\mathrm{M}_\mathcal{S}^\pi)$ & $\textbf{10.83}$ \tiny $\pm .56$ & $\textbf{11.22}$ \tiny $\pm .70$ \\
& $\mathcal{H}(\mathrm{M}^\pi)$ & $10.52$ \tiny $\pm .66$ & $\textbf{15.54}$ \tiny $\pm .16$ \\
& $-KL(\mathrm{M}_\mathcal{S}^\pi; \mathrm{M}^{\pi_{stoch}}_\mathcal{S})$ & $\textbf{-6.49}$ \tiny $\pm .36$ & $\textbf{-5.25}$ \tiny $\pm .90$ \\
& $-KL(\mathrm{M}^\pi; \mathrm{M}^{\pi_{stoch}})$ & $-6.45$ \tiny $\pm .61$ & $\textbf{-1.39}$ \tiny $\pm .31$ \\
& $-KL(\mathrm{M}_\mathcal{S}^\pi; \mathrm{M}^{\pi_{det}}_\mathcal{S})$ & $-6.57$ \tiny $\pm .61$ & $\textbf{-5.07}$ \tiny $\pm .69$ \\
& $-KL(\mathrm{M}^\pi; \mathrm{M}^{\pi_{det}})$ & $-7.27$ \tiny $\pm .58$ & $\textbf{-2.51}$ \tiny $\pm .35$ \\
\bottomrule
\end{tabular}
\end{small}
\end{center}
\vspace{-5mm}
\end{table}

As standard objectives in DMC are exclusively linear, we additionally define several general utilities: pure exploration (i.e. entropy maximization) and imitation of a deterministic or stochastic expert (i.e. KL minimization) over state and state-action measures.
Appendix \ref{app:implicit} reports an in-depth description of these objectives, and a nuanced discussion of results, including additional baselines.
For brevity, we also compare the performance of Soft FB and FB with explicit measure models in Table \ref{tab:mujoco}.
While performance is explainably similar for linear tasks, we find that a significant gap appears as soon as a deterministic optimal policy does not exist, confirming the effectiveness of Soft FB in producing a richer class of policies, with strong performance beyond linear rewards.

\section{Related Works}
\label{sec:related}

\textbf{Zero-shot Reinforcement Learning} \;
In its purest instantiation, reinforcement learning is centered on optimizing a single, scalar reward function \citep{sutton1998reinforcement}. Despite its flexibility \citep{silver2021reward}, this formulation does not adapt to changes in the objective without re-training.
One possible solution to this issue has risen to prominence in deep reinforcement learning, and revolves around goal-conditioned policies \citep{andrychowicz2017hidsight, eysenbach2022contrastive} and universal value functions approximators \citep{schaul2015universal}. Such methods normally specialize to specific, parameterized reward classes (e.g., Dirac \citep{tian2021model}), or combinations thereof \citep{frans2024unsupervised}, and can train conditional policies through self-supervision \citep{ghosh2023reinforcement}, with impressive empirical results \citep{akkaya2019solving}.

A parallel line of work is instead aimed at solving more general tasks, specifically by learning policy representations that enable efficient evaluations for any reward function.
Fundamental works in this direction revolve around the design and estimation of successor features and representations \citep{dayan1993improving, barreto2017successor}; interestingly, in this context, \citet{hunt2019composing} formulate a related entropy-regularized variant, but remain focused on linear rewards.

Forward-backward methods \citep{blier2021learning, touati2021learning, touati2023does}  leverage a low-rank decomposition of occupancy measures to extract task embeddings in a successor feature framework.
These algorithms' flexibility allows direct application to imitation learning \citep{pirotta2024fast}, self-supervised exploration \citep{urpi2025epistemically}, while also achieving competitive performance on high-dimensional simulated environments \citep{tirinzoni2024zero}. Recent works have further explored alternative parameterizations \citep{cetin2024finer, bagatella2025td}, more capable generative models for occupancies \citep{farebrother2025temporal}, fast online exploration or adaptation \citep{sikchi2025fast, urpi2025epistemically, rupf2025optimistic}.
However, forward-backward algorithms remain constrained to linear RL problems and, practically, deterministic policies, which becomes a limitation on more complex objectives, involving multimodal data distributions or exploration objectives.
Our work directly addresses this limitation.

\textbf{Non-linear Reinforcement Learning}\;
Much of the existing RL machinery builds upon the assumption that the objective may be broken down in additive terms, each of whom may be traced back to a single state-action pair \citep{sutton1998reinforcement}. This is referred to as RL with additive rewards, or Linear RL.
Convex RL \citep{zahavy2021reward, geist2021concave, mutti2022challenging} encompasses a richer class of objective, such as pure exploration \citep{hazan2019provably}, active learning in MDPs \cite{mutny2023active} and distribution matching \citep{kostrikov2020imitation, rupf2024zero}. A further generalization leads to RL with General Utilities \citep{zhang2020variational}, for which we provide more related works in Appendix \ref{app:related} due to space constraints.
Solutions to Convex or General Utilities may be found by iterative procedures \citep{geist2021concave}, which involve a low-level MDP-solving routine, or adversarial optimization schemes \citep{zahavy2021reward}. This generally produces mixture, non-Markovian policies, while \method returns a single, Markov policy.
Zero-shot methods have been applied to non-linear problems before \citep{pirotta2024fast}, but are restricted to imitation learning and only take non-additivity into account during inference.
To the best of our knowledge, our work is the first in exploring zero-shot solutions to arbitrary General RL problems in a principled and scalable way. \looseness -1

\section{Conclusion}
\label{sec:conclusion}

At its core, this work proposes a novel extension of zero-shot reinforcement learning beyond linear rewards.
We introduce a soft forward-backward algorithm, which leverages a simple entropy regularization mechanism to capture stochastic behaviors in a dynamical system.
At inference, the space of behaviors can be searched efficiently to retrieve an approximately optimal policy for an arbitrary general utility. \looseness -1

\textbf{Limitations and Future Work}\; 
While Soft FB may provably retrieve all stochastic policies, this requires infinite-dimensional task representations and expressive actors: in practical settings, a narrower set of policies will be retrieved. Formally studying the properties of this set represents an important direction for future work.
Accurate search over policies relies on precise modeling of successor measures: while a dedicated generative model suffices, it also increases the computational costs during training.
Furthermore, while we found simple zero-order optimization to be sufficient for search among policies, more involved optimization techniques may further scale this approach to higher-dimensional representation spaces. 
A further interesting extension would generalize our framework to arbitrary successor-feature-based algorithms.

Soft FB introduces a first-of-its-kind extension of zero-shot RL beyond linear rewards. We hope that this work represents a step forward in bringing fundamental works on general utilities closer to application in practical settings.

\newpage

\section*{Acknowledgements}
We would like to thank Andrea Tirinzoni, Núria Armengol Urpí, Marin Vlastelica, Pavel Kolev, Yifan Hu and Ehsan Sharifian for the fruitful discussions and valuable feedback.
Marco Bagatella is supported by the Max Planck ETH Center for Learning Systems. This project was supported in part by the Swiss National Science Foundation under NCCR Automation, grant agreement 51NF40 180545.

\section*{Impact Statement}

This paper presents an algorithm for zero-shot optimization of general objectives in MDPs. As such, its potential impact falls within the societal consequences of machine learning and robotics at large.

\bibliography{main}
\bibliographystyle{icml2026}

\newpage
\appendix
\onecolumn

\section{Theoretical results and proofs}
\label{app:proofs}

This section starts by reporting proofs for the main formal results. We later present some additional insights in smoothness and interpolation properties that arise with Soft FB's entropy regularization.

\subsection{Proofs}

This section contains proofs for Theorems \ref{th:maxent} and \ref{th:generalutility}; the latter is preceded by a useful intermediate Lemma.

\maxent*
\begin{proof}
    This result can be proven through a direct generalization of the results in \citet{haarnoja2018soft} and \citet{touati2021learning}.
    Starting from Equation \ref{eq:soft_policy}, setting $z=BR$, we can observe that
    \begin{align}
        \pi_z & \propto \exp (F_z^\top z + M^z\mathcal{H}_{\pi_z}) \\
        & \stackrel{z=BR}{=}\exp (F_z^\top BR + M^z\mathcal{H}_{\pi_z}) \\
        & \stackrel{Eq. \ref{eq:low_rank_decomposition}}{=} \exp\big(M^z(R+\mathcal{H}_{\pi_z})\big) \\
        & = \exp(Q^z_\text{soft}).
    \end{align}
    Up to normalization constants, the policy satisfies the optimality criterion from Theorem 1 in \citet{haarnoja2017reinforcement}, and is thus the optimal maximum entropy policy, i.e. $\pi_z = \mathop{\text{argmax}}_\pi M^\pi(R + \mathcal{H}_{\pi_z})$.
\end{proof}

\begin{lemma}
    Let $\tilde \Pi_z$ be the set of policies retrieved by Soft FB and $\pi \in \Pi$ be an arbitrary Markov policy. If $\pi(a|s) > 0$ for every $(s,a) \in \mathcal{S \times A}$ (i.e., $\pi$ has complete support), then $\pi \in \tilde \Pi_z$.
    \label{lemma:support}
\end{lemma}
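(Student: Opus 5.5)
The plan is to show that any full-support Markov policy $\pi$ is the optimal maximum entropy policy for a suitably chosen reward vector $R_\pi$. Then, by Theorem \ref{th:maxent}, $\pi$ coincides with $\pi_{BR_\pi}$, which belongs to $\tilde \Pi_z$. This is the inverse soft-RL construction: fix a target Q-function and read off the reward that produces it through the soft Bellman equation.

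\textbf{Step 1: construct the target soft Q-function.} Since $\pi(a|s)>0$ everywhere, define $Q(s,a) := \log \pi(a|s)$, which is finite. With this choice $\pi(\cdot|s) \propto \exp(Q(s,\cdot))$ holds with normalizer exactly one, so $V(s) := \log \sum_a \exp Q(s,a) = 0$. More generally, one may add any state-dependent function $h(s)$ to $Q$; $h$ then enters $V$ without changing the induced policy.

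\textbf{Step 2: recover the reward from the soft Bellman equation.} The soft Bellman equation reads
\[
Q(s,a) = R(s,a) + \gamma \mathbb{E}_{s'\sim P(\cdot|s,a)}\big[V(s')\big],
\]
with the entropy bonus absorbed into $V$ as in \citet{haarnoja2017reinforcement}. Set
\[
R_\pi(s,a) := Q(s,a) - \gamma\, \mathbb{E}_{s'}[V(s')].
\]
With $h=0$ this is simply $R_\pi(s,a)=\log \pi(a|s)$. Because the state and action spaces are finite and $\pi$ has full support, $R_\pi$ is bounded. Here the convention in Equation \ref{eq:soft_policy}, where $M^z\mathcal{H}_{\pi_z}$ is normalized by $(1-\gamma)$ and entropy is counted from the first step, should be reconciled with the standard soft Bellman equation. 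I expect to absorb any mismatch in scaling or time offset into the choice of $R_\pi$ and $h$, for instance by rescaling the reward by the appropriate factor of $(1-\gamma)$.

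\textbf{Step 3: verify optimality and conclude.} Next I check that $Q$ equals the soft Q-function of $\pi$ itself, namely $Q = M^\pi(R_\pi + \mathcal{H}_\pi)$ up to the normalization convention. This should follow by unrolling the Bellman recursion, using uniqueness of the fixed point of the contraction given $\gamma<1$. Since $\pi \propto \exp(Q^\pi_{\text{soft}})$, the optimality criterion of Theorem 1 in \citet{haarnoja2017reinforcement} applies, so $\pi$ is the unique optimal maximum entropy policy for $R_\pi$. Uniqueness comes from the strict concavity induced by entropy. By Theorem \ref{th:maxent}, $\pi_{BR_\pi}$ is also optimal for $R_\pi$, so $\pi = \pi_{BR_\pi} \in \tilde\Pi_z$.

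The main obstacle is Step 3's bookkeeping. I must match the paper's occupancy normalization $(1-\gamma)$, and ensure that the uniqueness of the optimal soft policy genuinely transfers, so that Theorem \ref{th:maxent} identifies $\pi$ itself rather than merely some policy with equal value.
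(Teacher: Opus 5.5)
Your construction is correct, and its skeleton matches the paper's: the reward $R_\pi=\log\pi$ (your case $h=0$) followed by an appeal to Theorem \ref{th:maxent}.

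\textbf{How optimality is certified.} This is where you differ. The paper does it in one line with no Bellman equation. Plugging $R_\pi$ into the $\mu_0$-marginalized objective gives $J^{\pi'}_\mathcal{H}(R_\pi)=-\mathbb{E}_{\mu_0,P,\pi'}\sum_t\gamma^t D_\text{KL}\big(\pi'(\cdot|s_t)\,\|\,\pi(\cdot|s_t)\big)\le 0$, with equality at $\pi'=\pi$. You instead note that $Q=\log\pi$, $V\equiv 0$ solves the soft Bellman optimality equation for $R_\pi$.

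\textbf{What your route buys.} It gives a cleaner identification step, which the paper glosses over. Any policy with $\pi'\propto\exp(Q^{\pi'}_\text{soft})$ has $Q^{\pi'}_\text{soft}=Q^\star$, the unique fixed point of the soft Bellman optimality operator. So both $\pi$ and the self-consistent $\pi_{BR_\pi}$ from the proof of Theorem \ref{th:maxent} equal $\mathrm{softmax}(Q^\star)$ at every state. The marginalized KL identity, by contrast, only pins $\pi'$ down on states reachable from $\mu_0$. This uniqueness of $Q^\star$, rather than strict concavity, is the right justification for your uniqueness claim.

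\textbf{The bookkeeping obstacle.} The issue you call the main obstacle does not arise in this lemma. You only use the conclusion of Theorem \ref{th:maxent}, whose objective $M^\pi(R+\mathcal{H}_\pi)$ weights reward and entropy by the same factor $(1-\gamma)$. So $R_\pi=\log\pi$ needs no correction. In fact the rescaling you float would break the argument: for reward $c\log\pi$ with $c\neq 1$, the soft optimum is no longer $\pi$ (in a single-state MDP it is $\propto\pi^{c}$). Any tension between the $(1-\gamma)$-normalized Equation \ref{eq:soft_policy} and the standard soft Bellman equation belongs to Theorem \ref{th:maxent}, which you and the paper both use as a black box.
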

\begin{proof}
    We will show that a maximum entropy problem admitting $\pi$ as its optimal policy can be constructed, which will imply that $\pi$ is part of the set of solutions to entropy-regularized instances retrieved by Soft FB by Theorem \ref{th:maxent}. Let $R(s,a)=\log \pi(a|s)$: since $\pi$ has complete support, $R(s,a)$ is bounded. Let us now consider its corresponding maximum entropy objective:
    \begin{align}        
    J^{\pi'}_\mathcal{H}(R) & = \mathop{\mathbb{E}}_{\mu_0, P, \pi'}\sum_{t=0}^{\infty}\gamma^t \Big(R(s_t, a_t) + \mathcal{H}\big(\pi'(\cdot|s)\big)\Big) \\
    & = \mathop{\mathbb{E}}_{\mu_0, P, \pi'}\sum_{t=0}^{\infty}\gamma^t \mathop{\mathbb{E}}_{a_t \sim \pi'(\cdot|s_t)}\Big(\log \pi(a_t|s_t) - \log \pi'(a_t|s_t)\Big) \\
    & = \mathop{\mathbb{E}}_{\mu_0, P, \pi'}\sum_{t=0}^{\infty}\gamma^t \Big( - D_\text{KL}\big(\pi'(\cdot|s_t)||\pi(\cdot|s_t\big) \Big) \leq 0.
    \end{align}
    This objective is an expected, discounted sum of negative KL distances, and therefore non-positive.
    As $D_\text{KL}(p||p)=0$, $J^{\pi}_\mathcal{H}(R^\star)=0$, and $\pi$ is the optimal policy for the maximum entropy objective with reward $R$.
    Theorem \ref{th:maxent} guarantees that there exist a reward embedding $z \in \mathcal{Z}$ recovering a solution $\pi_z$ for each maximum entropy problem; therefore, there exist a reward embedding $z \in \mathcal{Z}$ such that $\pi_z = \mathop{\text{argmax}}_{\pi' \in \Pi}J^{\pi'}_\mathcal{H}(R) = \pi$, and $\pi \in \tilde \Pi_z$.
\end{proof}

\generalutility*
\begin{proof}

    This proof will rely on the construction of a sequence of policies. We will first show that each policy is retrieved by Soft FB, and then that this sequence gets arbitrarily close to the optimum.

    Let us consider an arbitrary differentiable objective $f$ and its optimal policy $\pi^\star = \mathop{\text{argmax}}_{\pi \in \Pi} f(\mathrm{M}^\pi)$. Let $\bar \pi = \mathcal{U}(\mathcal{A})$ denote the uniform policy (i.e., the policy choosing each action with equal probability). We can construct a sequence of increasingly stochastic policies as a linear interpolation of the two through a parameter $\alpha \in (0, 1]$:
    \begin{equation}
        \pi_\alpha(a|s) := (1-\alpha)\pi^\star(a|s) + \alpha\bar\pi(a|s).
    \end{equation}
    For all $\alpha > 0$ and $(s,a) \in \mathcal{S \times A}$, $\pi_\alpha (a|s) > \alpha \bar\pi(a|s) > 0$. Thus, each policy $\pi_\alpha$ has full support. Through Lemma \ref{lemma:support} we can conclude that $\pi_\alpha \in \tilde \Pi_z$, that is the set of policies retrieved by Soft FB. We will now show that, as $\alpha \to 0$, $\pi_\alpha$ is approximately close to the optimum for $f$.

    We can first bound the distance in action space to the optimal policy $\pi^\star$ for each $s \in \mathcal{S}$, that is
    \begin{align}
        \|\pi^\star(\cdot|s) - \pi_\alpha(\cdot|s)\|_1 & = \sum_{a \ \in \mathcal{A}} | \pi^\star(a|s) - (1-\alpha) \pi^\star(a|s) - \alpha \bar \pi (a|s) | \\
        & = \sum_{a \ \in \mathcal{A}} \alpha | \pi^\star(a|s) - \bar \pi (a|s) | \\
        & \leq 2\alpha,
    \end{align}
    where the last inequality follows from the fact that $\pi^\star$ and $\bar\pi$ are probability distributions, and thus bounded in $[0,1]$.
    We can now apply the Simulation Lemma \citep{kearns2002near} to bound the distance between marginalized successor measures of the two policies:
    \begin{equation}
        \|\mathrm{M}^{\pi^\star} - \mathrm{M}^{\pi_\alpha}\|_1 \leq \frac{1}{1-\gamma} \max_{s \in \mathcal{S}}\|\pi^\star(\cdot|s) - \pi_\alpha(\cdot|s)\|_1 \leq \frac{2\alpha}{1-\gamma}
        \label{eq:bound_m}
    \end{equation}
    The last step requires bounding the difference $f(\mathrm{M}^\pi) - f(\mathrm{M}^{\pi_\alpha})$, which is possible through uniform continuity.
    Let us consider the set of all valid occupancy measures $\mathcal{K} = \{M^\pi\}_{\pi \in \Pi} \subset \mathbb{R}^{\mathcal{|S||A|}}$.
    Each element of $\mathcal{K}$ must respect the following linear constraints: $M(s,a) \geq 0$ for all $(s,a) \in \mathcal{S \times A}$ (i.e., probabilities must be non-negative), (ii) $\sum_{s,a \in \mathcal{S \times A}}M(s,a) = 1$ (i.e., $\mathrm{M}$ is a valid probability distribution), and (iii) $\sum_a \mathrm{M}(s,a) = (1-\gamma) \mu_0(s) + \gamma \sum_{s', a'}P(s|s', a')\mathrm{M}(s',a')$ (i.e., Bellman flow constraints respecting the dynamics of the MDP). $\mathcal{K}$ is thus closed and bounded, and by consequence a compact set.
    Furthermore, differentiability of $f$ implies continuity of $f$: by the Heine Cantor Theorem, a continuous function on a compact set is uniformly continuous.
    This is equivalent to the existence of a continuous, monotonic function $\omega_f:[0, \infty) \to [0, \infty)$ with $\lim_{\delta \to 0}\omega_f(\delta) = 0$ (i.e., a modulus of continuity) such that
    \begin{equation}
        |f(\mathrm{M})-f(\mathrm{M'})| \leq \omega_f(\|\mathrm{M} - \mathrm{M'}\|_1)
        \label{eq:uniform_cont}
    \end{equation}
    Combining the uniform continuity property in Equation \ref{eq:uniform_cont} with the bound on measures in Equation \ref{eq:bound_m}, we finally have
    \begin{equation}
        |f(\mathrm{M^{\pi^\star}})-f(\mathrm{M}^{\pi_\alpha})| \leq \omega_f(\|\mathrm{M}^{\pi^\star} - \mathrm{M}^{\pi_\alpha}\|_1) \leq \omega_f \bigg(\frac{2\alpha}{1-\gamma} \bigg)
    \end{equation}
    Since $\omega_f(x) \to 0$ as $x \to 0$, for any precision $\epsilon$, one can find a sufficiently small $\alpha$ such that $\omega_f(2\alpha(1-\gamma)^{-1}) < \epsilon$, and thus $\pi_\alpha$ achieves an arbitrarily close objective value to the optimum of $f$.
    As $\pi_\alpha \in \tilde \Pi_z$, we can conclude that there exist a policy among those retrieved by Soft FB that optimizes $f$ arbitrarily well.
\end{proof}

\subsection{Extended counterexample}
\label{app:counterexample}

\begin{wrapfigure}[15]{r}{0.4\textwidth}
  \begin{center}
    \includegraphics[width=0.4\textwidth]{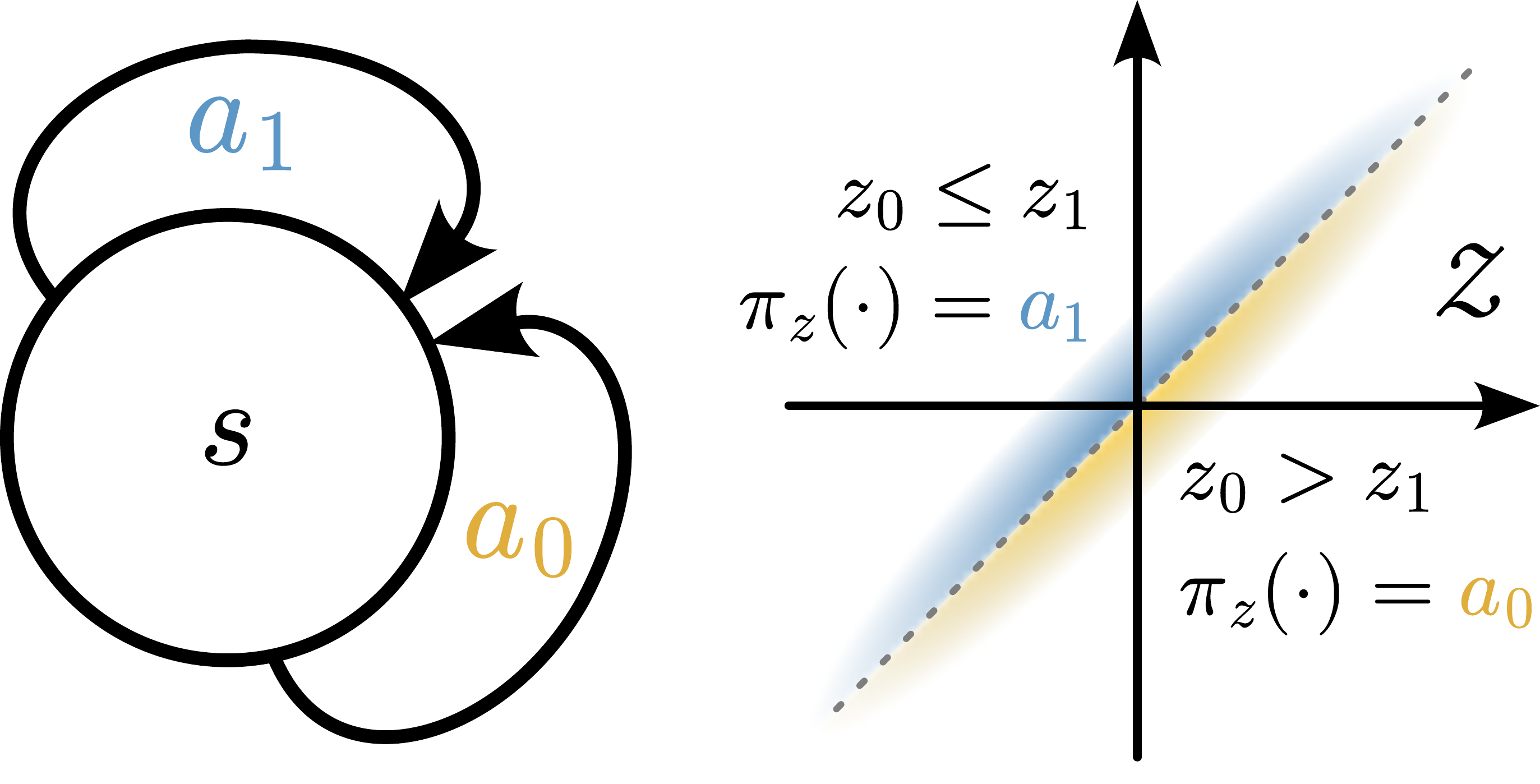}
  \end{center}
  \caption{The simple MDP referred to for the counterexample (left); we consider a 2-dimensional representation space $\mathcal{Z}$ (right), and policies $\pi_z$ that output the first or second action depending on the semi-plane their parameter $z$ belongs to (right).}
  \label{fig:counterexample}
\end{wrapfigure}

Section \ref{sec:motivation} anticipates a counterexample to clarify why policies retrieves by FB may fail to optimize a given GU. We will now formalize this counterexample in detail, while providing a visual description in Figure \ref{fig:counterexample}.

Let us consider an MDP $\mathcal{M}$ with a single state $\mathcal{S}=\{s_0\}$ and two actions $\mathcal{A}=\{a_0, a_1\}$. Let us also consider representations of size $d=2$ (i.e., $\mathcal{Z} = \mathbb{R}^2$).
Let us now set the backward representation matrix as the identity matrix (i.e., $B=I$).
By doing so, we can take (the transposes of) policy-conditional successor measure matrices $M_z$ as forward representation matrices (i.e., $F_z^\top = M_z$), and trivially satisfy Equation \ref{eq:low_rank_decomposition} as $F_z^\top B = M_z I = M_z$ for any policy $\pi_z$.
Furthermore, setting $B$ to the identity matrix projects all rewards functions $R \in \mathbb{R}^2$ to identical reward embeddings $z=BR=R$.

It remains to show that there is a family of policies that satisfies Equation \ref{eq:greedy_policy} as well.
We thus choose the family of $z$-parameterized policies $\Pi_z = \{\pi_z\}_{z \in \mathcal{Z}}$, where the policy parameterized by $z$ picks the action indexed by the largest of the two elements of $z$: $\pi_z(\cdot) = a_0$ if $z_0 > z_1$, else $a_1$. Intuitively, as $z=R$, these policies are simply choosing the action associated with the largest reward.
In order to check whether these policies satisfy Equation \ref{eq:greedy_policy}, we need to calculate its right hand-side; as $F_z^\top = M_z$, it suffices to compute the successor measure for all policies.
Let us start by considering $z_0 > z_1$: in this case $\pi_z(\cdot) = a_0$. When starting from $(s, a_0)$, the other state-action pair is never visited: $M_z(s, a_0, s, a_0) = 1$. When starting from $(s, a_1)$, this state-action pair is only visited at the beginning of the trajectory, and never again: $M_z(s, a_1, s, a_1) = (1-\gamma)\gamma := C$ with $0 \leq C < 1$. By considering that $M_z$ is a stochastic matrix, and similarly working through the case in which $z_0 \leq z_1$, we have
\begin{equation}
    F_z^\top = M_z =
    \begin{bmatrix}
    1 & 0 \\
    1-C & C 
    \end{bmatrix}
    \; \text{ if } z_0 > z_1 \text{, else }
    \begin{bmatrix}
    C & 1 - C \\
    0 & 1 \\
    \end{bmatrix}.
\end{equation}
We can now verify directly that Equation \ref{eq:greedy_policy} holds.
In the case in which $z_0 > z_1$, we have 
\begin{equation}
    F_z^\top z = \begin{bmatrix}
    1 & 0 \\
    1 - C & C 
    \end{bmatrix} \begin{bmatrix}
    z_0 \\
    z_1 
    \end{bmatrix} = \begin{bmatrix}
    z_0 \\
    (1-C)z_0 + Cz_1 
    \end{bmatrix}.
\end{equation}
Since $(1-C)z_0 + Cz_1 \stackrel {z_1<z_0}{\leq} (1-C)z_0 + Cz_0=z_0$, then the policy optimizes its own Q-function: $\pi_z(\cdot) = a_0 =\mathop{\text{argmax}}_\mathcal{A} F^\top_z z$.
Similarly, in the case in which $z_0 \leq z_1$, we have 
\begin{equation}
    F_z^\top z = \begin{bmatrix}
    C & 1-C \\
    0 & 1 
    \end{bmatrix} \begin{bmatrix}
    z_0 \\
    z_1 
    \end{bmatrix} = \begin{bmatrix}
    Cz_0 + (1-C)z_1 \\
    z_1 
    \end{bmatrix}.
\end{equation}
This time, $Cz_0 + (1-C)z_1 \stackrel{z_1 \geq z_0}{\geq} C(z_1) + (1-C)z_1 = z_1$, and the policy still optimizes its own Q-function: $\pi_z(\cdot) = a_1 = \mathop{\text{argmax}}_\mathcal{A}F_z^\top z$. Since these two cases ($z_0 > z_1$ and $z_0 \leq z_1$) cover all policies $\pi_z \in \Pi_z$, Equation \ref{eq:greedy_policy} holds.

All policies $\pi_z$ are strictly deterministic. We can now consider a convex RL objective such as \textit{pure exploration} over states and actions, i.e. $J^\pi = f(\mathrm{M}^\pi) = \mathcal{H}(\mathrm{M}^\pi)$, where $\mathcal{H}(\cdot)$ denotes Shannon entropy over state-action pairs. For this objective, all policies in $\Pi_z$ are actually \textit{minimizers}: $J^{\pi_z} = 0$ for all $z \in \mathcal{Z}$. The optimal policy is uniform over the two actions (i.e., $\pi^\star(\cdot) = \mathcal{U}(\mathcal{A})$), achieves $J^{\pi^\star}=\log 2$ and does not belong to $\Pi_z$. This thus represents an instance in which the set of policies retrieved by FB may not include the optimal policy for some GU.

\subsection{Additional remarks}
One well-known property of entropy regularization in RL is that of inducing smoothness in the mapping from rewards to optimal behaviors \citep{geist2019theory, husain2021regularized}. This property is reflected in Soft FB, as we discuss in this section.

\begin{remark}
    Let us assume that Equations \ref{eq:low_rank_decomposition} and \ref{eq:soft_policy} hold, and that backward representations $B$ are full-rank. Let us now consider the map $g: \mathcal{Z} \to \mathcal{M}$ from the reward representation space $\mathcal{Z}$ to the space of feasible occupancy measures $\mathcal{M}$, such that $g(z) = M^{\pi_z}$. The map $g$ is $\mathcal{C}^\infty$ (smooth).
\end{remark}
\begin{proof}
    The map $g$ is a composition of several functions: we will show that each of these functions is, in turn, smooth.
    \begin{itemize}
    \item As established in Theorem \ref{th:maxent}, each policy $\pi_z \in \tilde \Pi_z$ is the regularized optimal policy for reward $R$ if $z=BR$.
    We can thus project the reward representation $z$ to its reward as $R=(B^\top B)^{-1}B^\top z$, where $B^\top B$ is invertible as $B$ is full rank.
    This map $z \mapsto R$ is linear and thus also smooth in $z$.
    \item Let us define the optimal entropy-regularized state-action value function for reward $R$: $Q^\star = \max_{\pi \in \Pi} M^\pi (R + \mathcal{H}_\pi)$.
    This is the unique fixed point of the Soft Bellmann optimality operator $\mathcal{T}_r$. We may then define its root finding function as $G(Q, z) = Q - \mathcal{T}_r(Q) = 0$.
    Component-wise for each state-action pair $(s,a)$ it takes the form
    \begin{align}
        G_{(s,a)}(Q, r) = Q(s,a) - \left(
            r(s,a) + \gamma \sum_{s'} P(s'|s,a) \log \sum_{a'} \exp Q(s',a')
        \right)
    \end{align}
    which, as a composition of linear function and the LogSumExp function, is itself smooth.
    In order to apply the Implicit Function Theorem, we examine the Jacobian of $G$ w.r.t.\ $Q$. 
    The partial derivative takes the form
    \begin{align}
        \frac{\partial G_{(s,a)}}{\partial Q_{(s',a')}} 
        = \delta_{(s,a) = (s',a')} - \gamma P(s'|s,a) \pi_\text{soft}(a'|s') \quad\text{ where }\quad \pi_\text{soft}(a'|s') = \frac{\exp Q(s',a')}{\sum_b \exp Q(s',b)}.
    \end{align}
    In matrix form, this is $J_Q = \mathrm{I} - \gamma P^{\pi_\text{soft}}$,
    which has eigenvalues with norm of at most $\gamma < 1$, making it non-singular.
    We can thus apply the Implicit Function Theorem, and confirm that the map $R \mapsto Q^\star$ is unique and smooth.
    \item By Theorem \ref{th:maxent} $\pi_z = \text{softmax}(Q^\star)$: this map is smooth in $Q^\star$ because of the smoothness of the softmax operator.
    \item Finally we have $M^z = (\mathrm{I} - \gamma P^{\pi_z})^{-1}$. First, we notice that $\mathrm{I} - \gamma P^{\pi_z}$ is smooth as a composition of smooth functions, and second, that it is non-singular, as we showed for $J_Q$ above.
    As such, the matrix inverse is also smooth, making $\pi_z \mapsto M^z$ a smooth function.
    \end{itemize}
    As $g$ is a composition of smooth functions ($z \mapsto R \mapsto Q^\star \mapsto \pi_z \mapsto M^z$), we can conclude that it is also smooth.
\end{proof}

This property allows smooth interpolation of behaviors; however we note that the solutions of linearly interpolated task vectors do not necessarily lie on a line in the space of successor measures. In contrast, the remark above does not hold for FB in general: repurposing the counterexample in Section \ref{app:counterexample}, and in particular Figure \ref{fig:counterexample}, we can see that changes in $z$ results in a sharp change in $\pi_z$ and $M^z$ when $z_1=z_2$.

\section{Further Related Work}
\label{app:related}

\textbf{Non-linear Reinforcement Learning}\;
RL with General Utilities \citep{zhang2020variational} may not be directly solved through standard dynamic programming or actor-critic approaches, as the key assumption behind TD learning is violated. On the other hand, policy gradient theorems may still be derived \citep{zhang2020variational, zhang2021convergence, kumar2022policy, barakat2023reinforcement}.
Specific algorithms for subclasses of convex RL problems (such as maximum entropy RL \citep{haarnoja2018soft} or imitation learning \citep{kostrikov2020imitation}) can be derived, for instance by building upon duality \citep{sikchi2023imitation}, but do not generalize to arbitrary objectives.
Recent works have also focused on non-linear objectives displaying set properties \citep{de2024global}, while a parallel line of work has departed from optimizing for expected occupancy to consider finite trial objectives \citep{mutti2023convex, jain2023maximum}.
While theoretical analysis of algorithms for non-linear RL has received significant attention \citep{huang2024occupancy, barakat2025global}, their scalability to high-dimensional problem has not been extensively tested. Similarly, most algorithms deal with the standard online setting, and solve for one objective at a time.

\section{Explicit Measure Models}
\label{app:tdflow}

The inference procedure of Soft FB requires accurate measure estimates. As we mention in Section \ref{sec:inference}, while a rough estimate can be extracted from an implicit measure model (i.e., relying on forward and backward representations directly for importance sampling), we find that training an explicit measure model results in much more reliable estimates.
Following recent work on geometric horizon models \citep{farebrother2025temporal, zheng2025intention}, we train a generative model of $z$-conditional successor measures $M^z$ through a temporal-difference, conditional flow-matching objective; we briefly report the core idea as follows.

Let us consider a time-dependent probability path $m_t: \mathcal{S \times A \to \mathcal{S}}$ for $t \in [0, 1]$ describing a process transporting samples from an initial distribution $m_0=$ to $m_1 = M^z$. This process is described by a vector field $v_t: \mathcal{S \times S \times A \to S}$, which may be integrated to produce the flow $\psi_t: \mathcal{S \times S \times A \to S}$ governing the probability path:

\begin{gather*}
    \frac{\mathrm{d}}{\mathrm{d}t} \psi_t(x \mid s, a) = v_t(\psi_t(x \mid s, a) \mid s, a), \quad \psi_0(x \mid s, a) = x \iff \psi_t(x \mid s, a) = x + \int_0^t v_\tau(\psi_\tau(x \mid s, a) \mid s, a) \, \mathrm{d}\tau.
\end{gather*}

If $X_t := \psi_t(X_0 \mid S, A) \sim m_t(\cdot \mid S, A)$ for $X_0 \sim m_0$, $v_t$ is said to generate $m_t$; estimation of $v_t$ is facilitated by introducing a conditioning (e.g., $Z = X_1$) and choosing a Gaussian conditional probability path $p_{t,Z}:=\mathcal{N}(\cdot|tX_1, (1-t^2)I)$, which grants a closed form for a \textit{conditional} vector field $u_{t,X_1}(x):=(X_1 - x)/(1-t)$. Furthermore, off-policy learning is possible by bootstrapping (i.e., using the estimated vector field to sample from measures conditional on the next state), and variance reduction can be carried out by directly matching the the estimated vector field.
This results in the TD$^2$-CFM objective for training a parameterized estimator of the vector field $\tilde v_{t, \theta}: \mathcal{S \times S \times A \to S}$, which we report:
\begin{gather*}
    \vec{\mathcal{L}}(\theta) = \mathbb{E}_{\rho, t, Z, \vec{X}_t} \left[ \left\| \tilde{v}_t(\vec{X}_t \mid S, A; \theta) - \vec{u}_{t|Z}(\vec{X}_t \mid Z) \right\|^2 \right], \\
    \text{where } Z = X_1 \sim P(\cdot \mid S, A),~ \vec{X}_t \sim p_{t|Z}(\cdot \mid Z), \\
    \widetilde{\mathcal{L}}(\theta) = \mathbb{E}_{\rho, t, \widetilde{X}_t} \left[ \left\| \tilde{v}_t(\widetilde{X}_t \mid S, A; \theta) - \bar{\tilde{v}}_t(\widetilde{X}_t \mid S', \pi_z(S')) \right\|^2 \right], \\
    \text{where } X_0 \sim p_0,~ S' \sim P(\cdot \mid S, A),~ \widetilde{X}_t = \widetilde{\psi}_t(X_0 \mid S', \pi_z(S')), \\
    \mathcal{L}_{\text{TD}^2\text{-CFM}}(\theta) = (1 - \gamma)\vec{\mathcal{L}}(\theta) + \gamma{\widetilde{\mathcal{L}}}(\theta),
\end{gather*}
where $\rho$ represents the empirical distribution of states in the same dataset used for training Soft FB. We refer to \citet{farebrother2025temporal} for a detailed discussion, and for implementation details, which we closely follow.

\section{Additional Experimental Results}

\subsection{Correlation}
\label{app:correlation}

The policy search procedure outlined in Section \ref{sec:inference} relies on offline performance estimates, which are only effective if they correlate with ground truth performance in the environment.
We find that the degree of correlation depends on the objective and on the technique used for sampling from the measure: explicit models (denoted with, e.g., SFB$_{flow}$) generally result in more reliable policy evaluation.
For completeness, we compute Pearson correlation coefficients across the results from Table \ref{tab:quantitative} in Table \ref{tab:correlation}.
\begin{table}[ht]
\centering
\caption{Spearman's rank correlation coefficients between offline policy estimates and ground-truth performance in the didactic environment.}
\label{tab:correlation}
\begin{tabular}{l cccc}
\toprule
& FB & FB$_{flow}$ & SFB & SFB$_{flow}$ \\
\midrule
Linear RL & $0.98$ \tiny $\pm .01$ & $\textbf{1.00}$ \tiny $\pm .01$ & $0.83$ \tiny $\pm .02$ & $\textbf{1.00}$ \tiny $\pm .01$  \\
Goal-reaching RL & $0.90$ \tiny $\pm .01$ & $\textbf{1.00}$ \tiny $\pm .01$ & $0.86$ \tiny $\pm .03$ & $\textbf{1.00}$ \tiny $\pm .01$  \\
Deterministic IL & $0.91$ \tiny $\pm .02$ & $\textbf{0.97}$ \tiny $\pm .02$ & $0.41$ \tiny $\pm .03$ & $\textbf{1.00}$ \tiny $\pm .01$  \\
Stochastic IL & $0.90$ \tiny $\pm .04$ & $0.94$ \tiny $\pm .03$ & $0.34$ \tiny $\pm .05$ & $\textbf{1.00}$ \tiny $\pm .01$  \\
Pure Exploration & $0.09$ \tiny $\pm .14$ & $0.53$ \tiny $\pm .15$ & $0.01$ \tiny $\pm .04$ & $\textbf{1.00}$ \tiny $\pm .01$  \\
Constrained RL & $0.13$ \tiny $\pm .03$ & $0.76$ \tiny $\pm .15$ & $0.40$ \tiny $\pm .03$ & $\textbf{0.96}$ \tiny $\pm .03$  \\
Robust RL & $0.19$ \tiny $\pm .03$ & $0.96$ \tiny $\pm .02$ & $0.47$ \tiny $\pm .02$ & $\textbf{1.00}$ \tiny $\pm .01$  \\
\midrule
Average & 0.58 & 0.88 & 0.48 & \textbf{0.99}  \\
\bottomrule
\end{tabular}
\end{table}

\subsection{Linear RL on DMC}

We report numerical results from Figure \ref{fig:mujoco} (i.e., the evaluation of linear objectives in DMC) in Table \ref{tab:mujoco_linear}.
\begin{table}[h]
\centering
\caption{Episodic returns and step-wise action entropy across different environments as a function of $\|z\|$, averaged over tasks and reported from Figure \ref{fig:mujoco}.}
\label{tab:mujoco_linear}
\begin{tabular}{c c c c c c c c c}
\toprule
\multirow{2}{*}{$\|z\|$} & \multicolumn{2}{c}{walker} & \multicolumn{2}{c}{cheetah} & \multicolumn{2}{c}{quadruped} & \multicolumn{2}{c}{maze} \\
\cmidrule(lr){2-3} \cmidrule(lr){4-5} \cmidrule(lr){6-7} \cmidrule(lr){8-9}
 & Return & Entropy & Return & Entropy & Return & Entropy & Return & Entropy \\
\midrule 0.00& $123$ \tiny $\pm 24$ & $3.5$ \tiny $\pm 0.5$ & $77$ \tiny $\pm 5$ & $14.5$ \tiny $\pm 0.2$ & $171$ \tiny $\pm 10$ & $29.8$ \tiny $\pm 0.2$ & $0$ \tiny $\pm 0$ & $4.6$ \tiny $\pm 0.1$  \\
 0.25& $622$ \tiny $\pm 16$ & $0.3$ \tiny $\pm 0.1$ & $313$ \tiny $\pm 24$ & $13.1$ \tiny $\pm 0.1$ & $681$ \tiny $\pm 2$ & $24.8$ \tiny $\pm 0.3$ & $224$ \tiny $\pm 11$ & $3.8$ \tiny $\pm 0.2$  \\
 0.50& $688$ \tiny $\pm 13$ & $-2.3$ \tiny $\pm 0.2$ & $399$ \tiny $\pm 23$ & $11.2$ \tiny $\pm 0.2$ & $696$ \tiny $\pm 7$ & $20.3$ \tiny $\pm 0.4$ & $350$ \tiny $\pm 62$ & $2.6$ \tiny $\pm 0.2$  \\
 0.75& $698$ \tiny $\pm 12$ & $-8.2$ \tiny $\pm 0.7$ & $440$ \tiny $\pm 12$ & $7.0$ \tiny $\pm 0.5$ & $687$ \tiny $\pm 4$ & $16.1$ \tiny $\pm 0.3$ & $472$ \tiny $\pm 52$ & $-0.0$ \tiny $\pm 0.2$  \\
 1.00& $702$ \tiny $\pm 8$ & $-12.4$ \tiny $\pm 0.7$ & $446$ \tiny $\pm 19$ & $-3.2$ \tiny $\pm 0.9$ & $668$ \tiny $\pm 6$ & $7.1$ \tiny $\pm 1.0$ & $522$ \tiny $\pm 18$ & $-2.9$ \tiny $\pm 0.2$  \\
\bottomrule
\end{tabular}%
\end{table}

\subsection{Visualization of Action Samples}
\label{app:collapse}
While \method retrieves stochastic policies, the standard forward-backward objectives do not explicitly prevent policies to collapse towards determinism. While set of policies retrieved by FB may, in principle, also include stochastic policies, we find that this is, in practice, not the case. As a didactic example, we repeat the visualization in Figure \ref{fig:qualitative}, this time sampling actions from policies trained with FB. These samples are displayed in Figure \ref{fig:collapse}, which confirms that retrieved policies are deterministic.

\begin{figure}
    \centering
    \includegraphics[width=\linewidth]{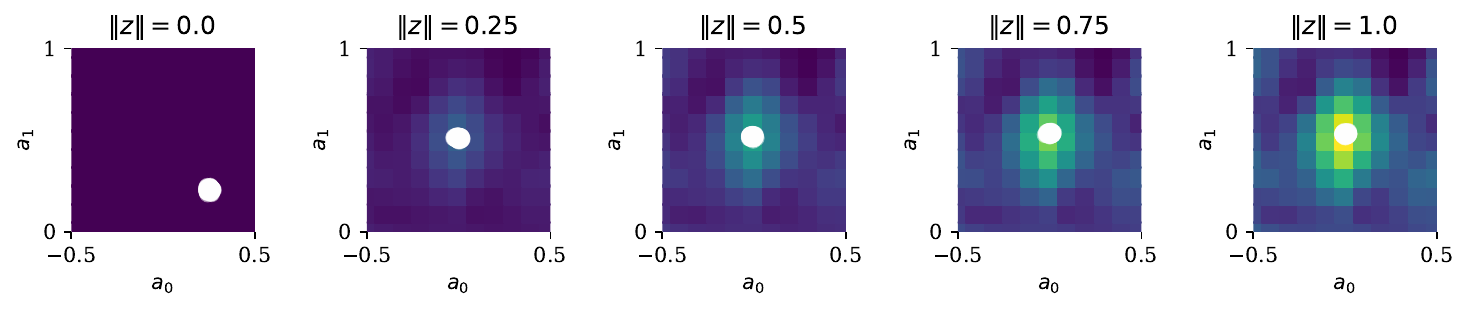}
    \vspace{-6mm}
    \caption{Qualitative evaluation of FB in a didactic environment. White dots are samples from policies $\pi_z$ over a 2D actions space, and the color map represents learned unregularized Q-values $Q_R^z$ for each action ($F_\theta(s_0, a, z)^\top z$). From left to right, we infer task embeddings $z$ solving a goal-reaching task, and scale them linearly. In this case, all retrieved policies are near-deterministic; interestingly, FB often outputs optimal actions for embeddings of norm $\|z\|<1$, despite not being explicitly trained on them.}
    \label{fig:collapse}
\end{figure}

\subsection{Extended results on DMC}
\label{app:implicit}

This Section extends the experimental results summarized in Table \ref{tab:mujoco} in several ways.

First, we include two additional objective ($J_{\text{robust}}$ and $J_{\text{constrained}}$, thus evaluating the same family of non-linear objectives considered in Table \ref{tab:quantitative}. When interpreting results, we remark that the domains considered are not generally designed for non-linear objectives; as dynamics are unknown, it is not clear whether some of these objectives may still admit deterministic optimal policies: for instance, high state entropy may be achievable in \texttt{maze} by a deterministic policy which covers the room in a non-overlapping zig-zag pattern.
We find that Soft FB is consistently the strongest method for the two objectives that do not admit an optimal deterministic policy, namely maximum entropy over states and actions ($\mathcal{H}(\mathrm{M}^\pi)$) and imitation of a stochastic policy ($-KL(\mathrm{M}^\pi;\mathrm{M}^{\pi_{stoch}})$).
Perhaps surprisingly, Soft FB also performs well in imitating a deterministic policy: while KL divergence to the expert is minimized by a deterministic policy, we hypothesize that finding a stochastic policy with a low divergence is generally easier.
For other objectives (e.g., state-only entropy maximization, constrained or robust objectives), we find that the best policy captured by FB may perform as well, confirming that deterministic policies may be sufficient for nonlinear objectives in specific MDPs.

Second, Table \ref{tab:implicit} includes the performance of FB and Soft FB with implicit measure models (2nd and 4th columns, respectively). We find that implicit measure models also perform relatively well, and not significantly worse than explicit measure models in most tasks when combined with Soft FB. We believe that with further compute allocation, explicit measure models can be further improved, as discussed in Appendix \ref{app:implementation}.

Finally, we also report the performance of a random policy sampled from those trained by FB for each objective and domain (first column of Table \ref{tab:implicit}. While this simple baseline is surpassed by Soft FB$_{flow}$ in each instance, the performance difference is not significant in a few cases. This suggests that, for specific objectives, Soft FB may in practice not recover a rich enough class of policies, such that the best of its policies may not be much better than the average FB policy (we remark that the guarantees in Section \ref{sec:enough} only hold with exact minimization of the objectives, and for sufficiently high-dimensional representations). Furthermore, the accuracy of the inference procedure remains limited by the quality of measure models, which are also learned, and therefore imperfect. This is in particular the case for higher-dimensional domains such as \texttt{quadruped}.
\begin{table}[t]
\caption{Zero-shot performance on general utilities in DMC. We extend Table \ref{tab:mujoco} by including three additional methods and two objectives. Scores are averages over 5 seeds, with 95\% confidence intervals and bold numbers signaling overlaps.}
\label{tab:implicit}
\begin{center}
\begin{small}
\begin{tabular}{llccccc}
\toprule
& & FB$_{rand}$ & FB & FB$_{flow}$ & Soft FB & Soft FB$_{flow}$ \\
\midrule
\multirow{6}{*}{\rotatebox[origin=c]{90}{\texttt{walker}}} 
& $\mathcal{H}(\mathrm{M}_\mathcal{S}^\pi)$ & $\textbf{12.99}$ \tiny $\pm .94$ & $\textbf{12.82}$ \tiny $\pm .04$ & $12.56$ \tiny $\pm .91$ & $\textbf{13.91}$ \tiny $\pm .20$ & $\textbf{13.97}$ \tiny $\pm .25$ \\
& $\mathcal{H}(\mathrm{M}^\pi)$ & $9.90$ \tiny $\pm .11$ & $9.77$ \tiny $\pm .09$ & $9.86$ \tiny $\pm .10$ & $\textbf{18.00}$ \tiny $\pm .02$ & $\textbf{18.01}$ \tiny $\pm .03$ \\
& $-KL(\mathrm{M}_\mathcal{S}^\pi; M^{\pi_{stoch}}_\mathcal{S})$ & $\textbf{-6.08}$ \tiny $\pm .89$ & $-6.91$ \tiny $\pm .03$ & $\textbf{-5.35}$ \tiny $\pm .24$ & $\textbf{-5.11}$ \tiny $\pm .27$ & $\textbf{-5.25}$ \tiny $\pm .14$ \\
& $-KL(\mathrm{M}^\pi; M^{\pi_{stoch}})$ & $-9.64$ \tiny $\pm .20$ & $-9.56$ \tiny $\pm .23$ & $-9.40$ \tiny $\pm .17$ & $\textbf{-1.39}$ \tiny $\pm .05$ & $\textbf{-1.48}$ \tiny $\pm .11$ \\
& $-KL(\mathrm{M}_\mathcal{S}^\pi; M^{\pi_{det}}_\mathcal{S})$ & $\textbf{-6.18}$ \tiny $\pm .05$ & $-6.87$ \tiny $\pm .97$ & $-5.69$ \tiny $\pm .27$ & $\textbf{-5.02}$ \tiny $\pm .16$ & $\textbf{-5.69}$ \tiny $\pm .02$ \\
& $-KL(\mathrm{M}^\pi; M^{\pi_{det}})$ & $-9.66$ \tiny $\pm .15$ & $-9.62$ \tiny $\pm .20$ & $-9.43$ \tiny $\pm .18$ & $\textbf{-1.48}$ \tiny $\pm .08$ & $\textbf{-1.53}$ \tiny $\pm .09$ \\
& $J_{\text{robust}}(\mathrm{M}^\pi)$ & $0.39$ \tiny $\pm .04$ & $\textbf{0.51}$ \tiny $\pm .08$ & $\textbf{0.51}$ \tiny $\pm .03$ & $\textbf{0.56}$ \tiny $\pm .12$ & $\textbf{0.44}$ \tiny $\pm .07$ \\
& $J_{\text{constrained}}(\mathrm{M}^\pi)$ & $\textbf{0.13}$ \tiny $\pm .22$ & $\textbf{0.13}$ \tiny $\pm .30$ & $\textbf{0.23}$ \tiny $\pm .32$ & $\textbf{-0.10}$ \tiny $\pm .25$ & $\textbf{0.17}$ \tiny $\pm .19$ \\
\midrule
\multirow{6}{*}{\rotatebox[origin=c]{90}{\texttt{cheetah}}} 
& $\mathcal{H}(\mathrm{M}_\mathcal{S}^\pi)$ & $11.57$ \tiny $\pm .43$ & $12.66$ \tiny $\pm .15$ & $11.74$ \tiny $\pm .43$ & $\textbf{13.69}$ \tiny $\pm .15$ & $\textbf{13.63}$ \tiny $\pm .10$ \\
& $\mathcal{H}(\mathrm{M}^\pi)$ & $9.96$ \tiny $\pm .21$ & $10.59$ \tiny $\pm .88$ & $11.68$ \tiny $\pm .89$ & $\textbf{17.85}$ \tiny $\pm .04$ & $\textbf{17.83}$ \tiny $\pm .07$ \\
& $-KL(\mathrm{M}_\mathcal{S}^\pi; \mathrm{M}^{\pi_{stoch}}_\mathcal{S})$ & $-6.54$ \tiny $\pm .68$ & $-5.58$ \tiny $\pm .20$ & $-5.56$ \tiny $\pm .31$ & $\textbf{-4.36}$ \tiny $\pm .15$ & $\textbf{-4.56}$ \tiny $\pm .31$ \\
& $-KL(\mathrm{M}^\pi; \mathrm{M}^{\pi_{stoch}})$ & $-8.64$ \tiny $\pm .45$ & $-7.90$ \tiny $\pm .90$ & $-7.25$ \tiny $\pm .76$ & $\textbf{-1.11}$ \tiny $\pm .05$ & $\textbf{-1.02}$ \tiny $\pm .08$ \\
& $-KL(\mathrm{M}_\mathcal{S}^\pi; \mathrm{M}^{\pi_{det}}_\mathcal{S})$ & $-6.52$ \tiny $\pm .78$ & $-5.58$ \tiny $\pm .28$ & $-5.45$ \tiny $\pm .65$ & $\textbf{-4.35}$ \tiny $\pm .27$ & $\textbf{-4.38}$ \tiny $\pm .18$ \\
& $-KL(\mathrm{M}^\pi; \mathrm{M}^{\pi_{det}})$ & $-8.92$ \tiny $\pm .40$ & $-8.14$ \tiny $\pm .85$ & $-7.55$ \tiny $\pm .79$ & $\textbf{-1.19}$ \tiny $\pm .05$ & $\textbf{-1.23}$ \tiny $\pm .08$ \\
& $J_{\text{robust}}(\mathrm{M}^\pi)$ & $\textbf{0.19}$ \tiny $\pm .07$ & $\textbf{0.22}$ \tiny $\pm .04$ & $\textbf{0.24}$ \tiny $\pm .06$ & $\textbf{0.24}$ \tiny $\pm .04$ & $\textbf{0.21}$ \tiny $\pm .01$ \\
& $J_{\text{constrained}}(\mathrm{M}^\pi)$ & $\textbf{0.11}$ \tiny $\pm .08$ & $\textbf{0.18}$ \tiny $\pm .09$ & $\textbf{0.14}$ \tiny $\pm .18$ & $\textbf{0.19}$ \tiny $\pm .10$ & $\textbf{0.23}$ \tiny $\pm .07$ \\
\midrule
\multirow{6}{*}{\rotatebox[origin=c]{90}{\texttt{quadruped}}} 
& $\mathcal{H}(\mathrm{M}_\mathcal{S}^\pi)$ & $\textbf{13.84}$ \tiny $\pm .49$ & $\textbf{13.67}$ \tiny $\pm .74$ & $13.37$ \tiny $\pm .43$ & $\textbf{14.48}$ \tiny $\pm .23$ & $\textbf{14.43}$ \tiny $\pm .24$ \\
& $\mathcal{H}(\mathrm{M}^\pi)$ & $10.19$ \tiny $\pm .08$ & $10.18$ \tiny $\pm .14$ & $10.11$ \tiny $\pm .08$ & $\textbf{19.09}$ \tiny $\pm .03$ & $\textbf{19.10}$ \tiny $\pm .01$ \\
& $-KL(\mathrm{M}_\mathcal{S}^\pi; \mathrm{M}^{\pi_{stoch}}_\mathcal{S})$ & $-5.54$ \tiny $\pm .47$ & $-5.99$ \tiny $\pm .60$ & $-5.44$ \tiny $\pm .28$ & $\textbf{-4.95}$ \tiny $\pm .35$ & $\textbf{-4.64}$ \tiny $\pm .20$ \\
& $-KL(\mathrm{M}^\pi; \mathrm{M}^{\pi_{stoch}})$ & $-9.93$ \tiny $\pm .08$ & $-9.87$ \tiny $\pm .18$ & $-9.92$ \tiny $\pm .16$ & $\textbf{-0.75}$ \tiny $\pm .02$ & $\textbf{-0.76}$ \tiny $\pm .04$ \\
& $-KL(\mathrm{M}_\mathcal{S}^\pi; \mathrm{M}^{\pi_{det}}_\mathcal{S})$ & $-5.64$ \tiny $\pm .53$ & $-6.07$ \tiny $\pm .74$ & $-5.61$ \tiny $\pm .30$ & $-4.96$ \tiny $\pm .14$ & $\textbf{-4.52}$ \tiny $\pm .16$ \\
& $-KL(\mathrm{M}^\pi; \mathrm{M}^{\pi_{det}})$ & $-9.95$ \tiny $\pm .07$ & $-9.99$ \tiny $\pm .22$ & $-9.94$ \tiny $\pm .20$ & $\textbf{-0.79}$ \tiny $\pm .02$ & $\textbf{-0.79}$ \tiny $\pm .02$ \\
& $J_{\text{robust}}(\mathrm{M}^\pi)$ & $0.07$ \tiny $\pm .01$ & $\textbf{0.14}$ \tiny $\pm .02$ & $0.09$ \tiny $\pm .03$ & $\textbf{0.17}$ \tiny $\pm .04$ & $0.08$ \tiny $\pm .01$ \\
& $J_{\text{constrained}}(\mathrm{M}^\pi)$ & $\textbf{0.03}$ \tiny $\pm .03$ & $\textbf{0.02}$ \tiny $\pm .04$ & $\textbf{0.04}$ \tiny $\pm .06$ & $\textbf{0.06}$ \tiny $\pm .04$ & $\textbf{0.07}$ \tiny $\pm .06$ \\
\midrule
\multirow{6}{*}{\rotatebox[origin=c]{90}{\texttt{maze}}} 
& $\mathcal{H}(\mathrm{M}_\mathcal{S}^\pi)$ & $\textbf{10.31}$ \tiny $\pm .34$ & $\textbf{10.68}$ \tiny $\pm .21$ & $\textbf{10.83}$ \tiny $\pm .56$ & $\textbf{10.62}$ \tiny $\pm .66$ & $\textbf{11.22}$ \tiny $\pm .70$ \\
& $\mathcal{H}(\mathrm{M}^\pi)$ & $10.00$ \tiny $\pm .48$ & $10.96$ \tiny $\pm .70$ & $10.52$ \tiny $\pm .66$ & $\textbf{15.67}$ \tiny $\pm .08$ & $\textbf{15.54}$ \tiny $\pm .16$ \\
& $-KL(\mathrm{M}_\mathcal{S}^\pi; \mathrm{M}^{\pi_{stoch}}_\mathcal{S})$ & $-6.32$ \tiny $\pm .62$ & $\textbf{-6.15}$ \tiny $\pm .67$ & $-6.49$ \tiny $\pm .36$ & $\textbf{-5.23}$ \tiny $\pm .31$ & $\textbf{-5.25}$ \tiny $\pm .90$ \\
& $-KL(\mathrm{M}^\pi; \mathrm{M}^{\pi_{stoch}})$ & $-7.22$ \tiny $\pm .34$ & $-6.61$ \tiny $\pm .14$ & $-6.45$ \tiny $\pm .61$ & $\textbf{-1.08}$ \tiny $\pm .21$ & $\textbf{-1.39}$ \tiny $\pm .31$ \\
& $-KL(\mathrm{M}_\mathcal{S}^\pi; \mathrm{M}^{\pi_{det}}_\mathcal{S})$ & $-6.62$ \tiny $\pm .54$ & $-6.42$ \tiny $\pm .58$ & $-6.57$ \tiny $\pm .61$ & $\textbf{-5.76}$ \tiny $\pm .32$ & $\textbf{-5.07}$ \tiny $\pm .69$ \\
& $-KL(\mathrm{M}^\pi; \mathrm{M}^{\pi_{det}})$ & $-8.67$ \tiny $\pm .54$ & $-7.27$ \tiny $\pm .96$ & $-7.27$ \tiny $\pm .58$ & $\textbf{-2.37}$ \tiny $\pm .27$ & $\textbf{-2.51}$ \tiny $\pm .35$ \\
& $J_{\text{robust}}(\mathrm{M}^\pi)$ & $0.17$ \tiny $\pm .10$ & $\textbf{0.44}$ \tiny $\pm .06$ & $\textbf{0.34}$ \tiny $\pm .19$ & $0.33$ \tiny $\pm .13$ & $\textbf{0.52}$ \tiny $\pm .05$ \\
& $J_{\text{constrained}}(\mathrm{M}^\pi)$ & $-0.01$ \tiny $\pm .11$ & $\textbf{0.46}$ \tiny $\pm .22$ & $\textbf{0.53}$ \tiny $\pm .24$ & $\textbf{0.31}$ \tiny $\pm .22$ & $\textbf{0.57}$ \tiny $\pm .14$ \\
\bottomrule
\end{tabular}
\end{small}
\end{center}
\end{table}

\section{Implementation Details}
\label{app:implementation}

Our practical implementation of Soft FB builds upon the vanilla FB algorithm introduced in \citet{touati2021learning} and open-sourced in \citet{tirinzoni2024zero}.
Architecturally, the sole modification lies in changing the policy's parameterization: from a Gaussian with a fixed standard deviation of $\sigma=0.3$ to a squashed Gaussian with learnable standard deviation. This modification is applied to all of the algorithms we evaluate. All networks are MLPs with embedding layers; width and depth are reported in Table \ref{tab:hyperparameters}.

Unlike FB, Soft FB trains an action-state value function to estimate discounted sums of entropy terms (see Equation \ref{eq:entropy_td}), which shares the same optimization hyperparameters and the same architecture as the forward map $F_\theta$, but outputs scalar values instead of embeddings. As the forward map, the critic also relies on twin networks, and on a target network updated at the same pace.
While the discounted sum of entropies may also be estimated implicitly through forward and backward representations, we found that training a separate critic resulted in more reliable estimates.

When explicit measure models are used for inference, a flow-matching generative model is trained in parallel with Soft FB. For this purpose, we closely follow the implementation details described in \citep{farebrother2025temporal}. Due to limited computational budget, we train the model for fewer gradient steps (3M instead of 8M). While this was sufficient for policy selection, allocating further resource to measure estimation may bring additional performance gains.

Our implementation of Soft FB inherits all of FB's hyperparameters, without any additional tuning; we present the main hyperparameters in Table \ref{tab:hyperparameters}.
Soft FB does not introduce any additional hyperparameter: the standard entropy regularization coefficient common in entropy-regularized RL \citep{haarnoja2017reinforcement} is not needed, as Soft FB trains on all levels of entropy regularization, as discussed in Section \ref{sec:practical}.
Additionally scaling the entropy term by a fixed coefficient is possible, and may lead to better performance for tasks in which high- or low-entropy policies are favored.


\begin{table}[htbp]
    \centering
    \caption{Hyperparameter Configuration}
    \label{tab:hyperparameters}
    \begin{tabular}{lc}
        \toprule
        \textbf{Parameter} & \textbf{Value} \\
        \midrule
        $F_\theta$ - Learning rate & $1 \times 10^{-4}$ \\
        $F_\theta$ - Width & 1024 \\
        $F_\theta$ - Depth & 4 \\
        $B_\phi$ - Learning rate & $1 \times 10^{-4}$ \\
        $B_\phi$ - Width & 256 \\
        $B_\phi$ - Depth & 3 \\
        $\pi_\psi$ - Learning rate & $1 \times 10^{-4}$ \\
        $\pi_\psi$ - Width & 1024 \\
        $\pi_\psi$ - Depth & 4 \\
        $Q_{\mathcal{H}, \eta}$ - Learning rate & $1 \times 10^{-4}$ \\
        $Q_{\mathcal{H}, \eta}$ - Width & 1024 \\
        $Q_{\mathcal{H}, \eta}$ - Depth & 4 \\
        Optimizer & Adam \citep{kingma2015adam} \\
        Dimensionality of $z$ & $50$ \\
        Polyak coefficient $\tau$ & $0.01$ \\
        Orthonormality regularization coefficient & $1.0$ \\
        Goal sampling ratio & $0.5$ \\
        Q-value aggregation & \texttt{mean} \\
        Batch size & $1024$ \\
        \bottomrule
    \end{tabular}
\end{table}

\subsection{Low-dimensional Evaluation}

The didactic MDP described in Section \ref{subsec:qualitative} has a discount factor of $0.5$; due to its simplicity, agents are trained for $5\cdot10^4$ gradient steps.
The task inference procedure is carried out as described in Section \ref{sec:inference} with $1024$ uniformly sampled reward embeddings as candidates. In the case of standard FB, they are always sampled from the surface of an hypersphere, as the agent was not trained for embeddings lying within its volume.
For each $z$, $1024$ samples are drawn from the $z$-conditional successor measure $\hat M^{\pi_z}$, estimated through either implicit or explicit measure models.
These samples are then evaluated according the objective, which are presented in full in Table \ref{tab:objectives}; the highest-scoring sample is provided as a conditioning to the policy, which is then evaluated in the environment for $1024$ episodes to produce the scores reported in Table \ref{tab:quantitative}.

Objectives were designed to be easily interpretable: the linear task encourages states on a unit circle around the origin, the goal-conditioned task involves landing in proximity of a certain state, while the deterministic and stochastic imitation learning tasks feature an expert which always visits the same state, or visits states uniformly on a line, respectively.
The pure exploration objective simply maximizes state coverage, while the robust objective considers the minimum between a goal conditioned objective and its complement, and the constrained objective maximizes a goal-conditioned objective within a given range. For objectives requiring estimation of entropies or Kullback-Leibler divergence, we employ a standard nearest-neighbor-based estimator \citep{wang2009divergence}, with $k=3$. For each objective, we normalized scores between minima and maxima (theoretical if available, or estimated from the policies with lowest performance), as described in the last two columns of Table \ref{tab:objectives}. Across this work, confidence intervals are computed assuming scores are normally distributed.

\begin{table}[htbp]
\caption{Objectives considered in the quantitative evaluation in low-dimensional settings (see Figure \ref{fig:quantitative}). $s=(x, y)$ denotes the two coordinates of each state $s \in \mathcal{S}$.}
\centering
\begin{tabular}{llll}
\toprule
\textbf{Task} & \textbf{Objective} & \texttt{min} & \texttt{max}\\
\midrule
Linear RL & $\max_\pi \langle \mathrm{M}_\mathcal{S}^\pi , R \rangle$ with $R(x, y) = - ((x^2+y^2) - 1)$ & $\approx 0$ & $1$ \\
Goal-conditioned RL & $\max_\pi \langle \mathrm{M}_\mathcal{S}^\pi , R_{(0.0, 0.5)}(x,y) \rangle$ with $R_{(x_g, y_g)}(x, y) = \mathbf{1}_{(x-x_g)^2+(y-y_g)^2 < 0.2}$ & $0$ & $1$ \\
Deterministic IL & $\max_\pi -D_\text{KL}(\mathrm{M}_\mathcal{S}^\pi||\mathrm{M}_\mathcal{S}^\star)$ with $\mathrm{M}_\mathcal{S}^\star = \mathbf{1}_{s=(0.0, 0.5)}$ & $\approx -15$ & $0$ \\
Stochastic IL & $\max_\pi -D_\text{KL}(\mathrm{M}_\mathcal{S}^\pi||\mathrm{M}_\mathcal{S}^\star)$ with $\mathrm{M}_\mathcal{S}^\star = \mathcal{U}(\{x \in [-0.5, 0.5], y=0]\})$ & $\approx -15$ & $0$ \\
Pure Exploration & $\max_\pi \mathcal{H(\mathrm{M}_\mathcal{S}^\pi)}$ & $\approx 8$ & $\approx 16$ \\
Robust MDP & $\max_\pi \min \big (\langle \mathrm{M}_\mathcal{S}^\pi, R_{(0.0, 0.5)}(x,y)\rangle, \langle \mathrm{M}_\mathcal{S}^\pi, 1-R_{(0.0, 0.5)}(x,y)\rangle \big)$ & $0$ & $0.5$ \\
Constrained RL & $\max_\pi \langle \mathrm{M}_\mathcal{S}^\pi, R_{(0, 0.5)}(x,y)\rangle$ s.t. $\langle \mathrm{M}_\mathcal{S}^\pi, R_{(0., 0.5)}(x,y) \rangle < 0.9$ & $0$ & $0.9$ \\
\bottomrule
\end{tabular}
\label{tab:objectives}
\end{table}

\subsection{High-dimensional evaluation}

\begin{figure}[htbp]
    \centering
    \begin{minipage}{0.23\textwidth}
        \centering
        \texttt{walker} \\
        \vspace{2mm}
        \includegraphics[width=\textwidth]{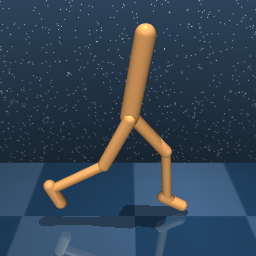}
    \end{minipage}
    \hfill
    \begin{minipage}{0.23\textwidth}
        \centering
        \texttt{cheetah} \\
        \vspace{2mm}
        \includegraphics[width=\textwidth]{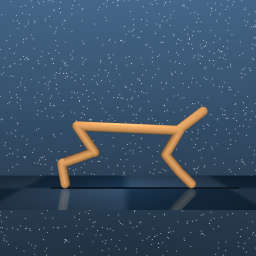}
    \end{minipage}
    \hfill
    \begin{minipage}{0.23\textwidth}
        \centering
        \texttt{quadruped} \\
        \vspace{2mm}
        \includegraphics[width=\textwidth]{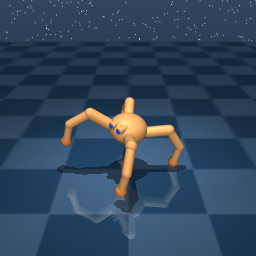}
    \end{minipage}
    \hfill
    \begin{minipage}{0.23\textwidth}
        \centering
        \texttt{maze} \\
        \vspace{2mm}
        \includegraphics[width=\textwidth]{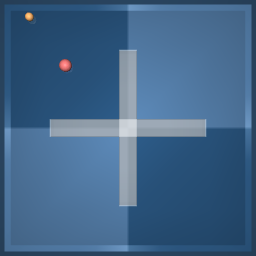}
    \end{minipage}

    \caption{The four domains considered from ExORL \citep{yarats2022exorl} and the Deepmind Control Suite \citep{tassa2018deepmind}.}
    \label{fig:environments}
\end{figure}

High-dimensional experiments revolve around locomotion and navigation tasks from the Deepmind Control Suite \citep{tassa2018deepmind} and use ExORL data \citep{yarats2022exorl}. All evaluation parameters, including linear reward functions, are taken from \citet{touati2021learning}: algorithms are trained until convergence (for 3M gradient steps) with a discount factor of $0.98$ ($0.99$ for \texttt{maze}).
The inference procedure for General Utilities differs minimally from the didactic environment (detailed previously): it evaluates $1024$ candidates for $z$, but considers slightly more samples from the estimated successor measure ($2048$). The chosen policy is then evaluated through a Monte Carlo estimator: we execute the policy for $10$ episodes and resample visited state-action pairs according to a geometric distribution parameterized by $1-\gamma$.

Objectives involving entropy or Kullback-Leibler divergences also rely on a K-nearest-neighbor-based estimator, with $k=3$ \citep{wang2009divergence}.
For imitation learning objectives, the expert is a deterministic optimal policy for one of the tasks defined by the suite (\texttt{walk} for all domains, except for \texttt{maze}, which demonstrates \texttt{reach\_bottom\_left}); the stochastic variants simply inject Gaussian noise ($\sigma=1.0$) to expert actions.
The robust and constrained objectives are less direct in their definition. For simplicity, we consider the velocities of the first two degrees of freedoms for all embodiments (e.g., going right/left and up/down for \texttt{maze}), and refer to the function extracting them from a state as as $v_x$ and $v_y$. The robust objective is simply the minimum between the discounted averages of the two absolute velocities, i.e. $J_\text{robust}(\mathrm{M}^\pi) = \min (\langle \mathrm{M}^\pi,R_x  \rangle, \langle \mathrm{M}^\pi,R_y \rangle)$, where $R_x(s) = \text{abs}(v_x(s))$ and $R_y(s) = \text{abs}(v_y(s))$. The constrained objective encourages high velocity in the first degree of freedom, while ensuring the velocity is not always positive: $J_\text{constrained}(\mathrm{M}^\pi) = \langle  \mathrm{M}^\pi, R\rangle$ constrained by $\langle  \mathrm{M}^\pi, R_c \rangle > 0$ with $R(s)=v_x(s)$ and $R_c(s)=\mathbf{1}_{v_x(s) < 0}$.

As in this case minimum and maximum performance are not easily estimated, we do not normalize scores.

\end{document}